\documentclass{article}

% if you need to pass options to natbib, use, e.g.:
\PassOptionsToPackage{numbers, compress}{natbib}
% before loading neurips_2022

% ready for submission
% \usepackage{neurips_2022}

% to compile a preprint version, e.g., for submission to arXiv, add add the
% [preprint] option:
% \usepackage[preprint]{neurips_2022}
% \usepackage{neurips_2022}

% to compile a camera-ready version, add the [final] option, e.g.:
\usepackage[final]{neurips_2022}

% to avoid loading the natbib package, add option nonatbib:
%    \usepackage[nonatbib]{neurips_2022}
\usepackage[utf8]{inputenc} % allow utf-8 input
\usepackage[T1]{fontenc}    % use 8-bit T1 fonts
\usepackage{hyperref}       % hyperlinks
\usepackage{url}            % simple URL typesetting
\usepackage{booktabs}       % professional-quality tables
\usepackage{amsfonts}       % blackboard math symbols
\usepackage{nicefrac}       % compact symbols for 1/2, etc.
\usepackage{microtype}      % microtypography
\usepackage{xcolor}         % colors

\usepackage[pdftex]{graphicx}
\usepackage{bbm}
\usepackage{wrapfig, subfig}
\usepackage[font=small]{caption}
\usepackage{bbm}
\usepackage{dsfont} % for indicator function
\usepackage{wrapfig, subfig}
\usepackage[font=small]{caption}
\usepackage{adjustbox, xcolor}
%%%%% NEW MATH DEFINITIONS %%%%%
%Commands from SpecLoss Paper.
\newcommand{\aug}[1]{\mathcal{A}(\cdot|#1)}
\newcommand{\augp}[2]{\mathcal{A}(#1|#2)}
\newcommand{\pndata}{\mathcal{P}_{\overline{\mathcal{X}}}}

\newcommand{\ndata}{\overline{\mathcal{X}}}
\newcommand{\adata}{\mathcal{X}}
\newcommand{\nsample}{\overline{x}}

\newcommand{\popgraph}{\mathcal{G}^p} 
\newcommand{\wgedge}[2]{w_{#1#2}}
\newcommand{\Loss}[1]{\mathcal{L}(#1)}
\newcommand{\Exp}[1]{\mathbb{E}_{#1}}
\newcommand{\globalminf}{f^*_{pop}}
\newcommand{\ngraph}{\overline{\boldsymbol{g}}}
\newcommand{\agraph}{\boldsymbol{g}}
\newcommand{\agraphp}{\boldsymbol{g}^{\prime}}
\newcommand{\clf}{h}
\newcommand{\clfx}[1]{h(#1)}

\newcommand{\withinpartoptset}{\sum_{\agraph \in S_*, \agraphp \in S_*}\mathds{1}(GED({\agraph},{\agraphp}) \leq 2\delta)}
\newcommand{\crosspartset}{\sum_{\agraph \in S_i, \agraphp \notin S_i}\mathds{1}(GED(\agraph,\agraphp) \leq 2\delta)}
\newcommand{\sumopt}{\sum_{x \in S_*} w_x} 
\newcommand{\crosspartoptset}{\sum_{\agraph \in S_*, \agraphp \notin S_*}\mathds{1}(GED(\agraph,\agraphp) \leq 2\delta)}

\newcommand\mypar[1]{\noindent \textit{#1}}

\usepackage{amsmath}
\usepackage{amssymb}
\usepackage{mathtools}
\usepackage{amsthm}
\usepackage{enumitem}
\usepackage[font=small]{caption}
% if you use cleveref..
\usepackage[capitalize,noabbrev]{cleveref}
\usepackage{soul}
\usepackage{fancyhdr}
\usepackage{lastpage}

\theoremstyle{plain}
\newtheorem{theorem}{Theorem}[section]

\newtheorem{lemma}[theorem]{Lemma}
\newtheorem{corollary}[theorem]{Corollary}
\theoremstyle{definition}

\newtheorem{definition}[theorem]{Definition}
\newtheorem{assumption}[theorem]{Assumption}

\theoremstyle{remark}

\def\natdata{\ensuremath\overline{\mathcal{X}}}

\urlstyle{same}

\title{Analyzing Data-Centric Properties for Graph Contrastive Learning}

% The \author macro works with any number of authors. There are two commands
% used to separate the names and addresses of multiple authors: \And and \AND.
%
% Using \And between authors leaves it to LaTeX to determine where to break the
% lines. Using \AND forces a line break at that point. So, if LaTeX puts 3 of 4
% authors names on the first line, and the last on the second line, try using
% \AND instead of \And before the third author name.

\author{%
  Puja Trivedi\\
  University of Michigan\\
  \texttt{pujat@umich.edu} \\
  \And
   Ekdeep Singh Lubana\\
  University of Michigan \\
  CBS, Harvard University\\
   \texttt{eslubana@umich.edu} \\
    \And
      Mark Heimann\\
  Lawrence Livermore National Labs \\
  \texttt{heimann2@llnl.gov} \\
   \AND
  Danai Koutra \\
  Unversity of Michigan \\
    \texttt{dkoutra@umich.edu} \\
   \And
   Jayaraman J. Thiagarajan\\
  Lawrence Livermore National Labs\\
    \texttt{jjayaram@llnl.gov} \\
}
\begin{document}

\maketitle

\begin{abstract}
Recent analyses of self-supervised learning (SSL) find the following data-centric properties to be critical for learning good representations: \textit{invariance} to task-irrelevant semantics, \textit{separability} of classes in some latent space, and \textit{recoverability} of labels from augmented samples. However, given their discrete, non-Euclidean nature, graph datasets and graph SSL methods are unlikely to satisfy these properties. This raises the question: how do graph SSL methods, such as contrastive learning (CL), work well? To systematically probe this question, we perform a generalization analysis for CL when using generic graph augmentations (GGAs), with a focus on data-centric properties. Our analysis yields formal insights into the limitations of GGAs and the necessity of task-relevant augmentations. As we empirically show, GGAs do not induce task-relevant invariances on common benchmark datasets, leading to only marginal gains over naive, untrained baselines. Our theory motivates a synthetic data generation process that enables control over task-relevant information and boasts pre-defined optimal augmentations. This flexible benchmark helps us identify yet unrecognized limitations in advanced augmentation techniques (e.g., automated methods). Overall, our work rigorously contextualizes, both empirically and theoretically, the effects of data-centric properties on augmentation strategies and learning paradigms for graph SSL. 
\let\thefootnote\relax\footnote{Correspondence to \texttt{pujat@umich.edu}.} 
\end{abstract}

\section{Introduction}\label{sec:intro}
Self-supervised learning (SSL) \cite{Chen20_SimCLR,Chen20_SimSiam,barlow_zbontar21,He20_MoCo,Swav_Caron20,DINO_caron21,Tian20_CMC,Oord18_CPC,Hjelm19_DeepInfoMax} has revolutionized visual representation learning by producing representations that are more robust \cite{Hendrycks19_SSLRobust,Liu21_SSLDatasetImbalance}, transferable \cite{Ericsson21_SSLTransfer, Hu20_PretrainingGNNs}, and semantically consistent \cite{DINO_caron21} than their supervised counterparts. 
This impressive empirical success has motivated a surge of efforts that seek to gain insights into SSL's behavior~\citep{Arora19_TheoreticalAnalysis,HaoChen_SpectralCL,Tian20_WhatMakesForGoodViews,Kugelgen_ProvablySeparates, Zimmermann_Inverts, Wang_UniformityAlignment, Purushwalkam20_DemystifyingCL, Grill20_BYOL} or adapt successful frameworks to different modalities, including graph data \cite{You20_GraphContrastiveLearning,Hassani20_MVGRL,Thakoor21_BGRL,Zhu20_GCA,Sun20_InfoGraph}.
Notably, many analyses of SSL have converged upon the following data-centric properties as critical to its success: (i) augmentations should induce \textit{invariance} to task-irrelevant attributes, as to better reflect the underlying data generation process and improve generalizability; (ii) samples (and corresponding augmentations) from different underlying classes should be \textit{separable} in some latent space, as to ensure a high-performing classifier is realizable; and (iii) labels of augmented samples should be \textit{recoverable} from the natural sample using which they were generated~\cite{Tian20_WhatMakesForGoodViews,Purushwalkam20_DemystifyingCL, Tsai21_SSLMultiview} so that representations are semantically consistent for downstream tasks. 
Due to the continuous representation of natural images and well-designed augmentation strategies, these properties are indeed aligned with standard visual SSL practices~\cite{Wei21_SelfTraining}.

However, despite the growing popularity of SSL for graph representation learning, it appears unlikely that the above properties are supported for non-Euclidean, discrete data. Indeed, the design of \textit{recoverable} graph data augmentation \cite{You21_GraphCLAutomated, Kong20_FLAG,suresh21_AdvGCL} remains an open research area because is it difficult to determine \textit{prima facie} what changes to a graph's topology or node features will preserve semantics. Moreover, as graphs are often abstract representations of structured data, it is also unclear what \textit{invariances} are relevant to the downstream task. The assumption of a \textit{separable} latent space may also be violated as intermediate points in this latent space may be meaningless in the discrete, structured input space. In contrast to natural image data, the systematic evaluation of these properties for graph SSL is difficult as it must accommodate both discrete and structured data.

\vspace{0.13cm}
\mypar{Our Work.} 
Better understanding the relationship between graph SSL practices and the aforementioned properties can help explain the behavior of existing frameworks and inform the design of new ones. Therefore, in this work, we take the first step by analyzing commonly used generic graph augmentations (GGAs) and designing useful tools that enable probing of these properties, including a theoretical framework and a synthetic data generation process that helps disentangle the effects of unrecoverable augmentations from  performance. Our contributions can be summarized as follows: 

\noindent \textbf{Sec.~\ref{sec:roleofpag}: Analysis of Generalization and Separability.} We provide the first generalization error bound for graph CL when using GGAs, demonstrating that GGAs can induce a performance-separability trade-off that is determined by underlying dataset properties (see \autoref{fig:data_props}).

\noindent \textbf{Sec.~\ref{sec:empircalstudy}: Missing Invariance on Benchmark Datasets.} On standard benchmarks, we show that models trained with GGAs have marginal improvements in accuracy and induce limited task-relevant invariance, at best, when compared to untrained encoders. We thus reveal a fundamental misalignment between the objectives and practical behavior of graph CL (see \autoref{fig:tu_inv_acc}). 

\noindent \textbf{Sec.~\ref{sec:stylevcontent}: Synthetic Data Generation Process.} We propose a synthetic data generation process that allows for control over augmentation recoverability and dataset separability (see \autoref{fig:style_vs_caption}). Using this process, we validate our theoretical observations and demonstrate that recently proposed automated and implicit augmentation methods struggle to induce task-relevant invariances (see \autoref{fig:svc_synthetic}). 
\vspace{-9pt}
\section{Background}\label{sec:preliminaries}
In this section, we briefly discuss existing graph SSL paradigms. (Please see App. \ref{app:related} for additional discussion.)  We then discuss the motivation behind data-centric properties (task-relevant invariance, separability and recoverabilty) central to this work. 

\textbf{Self-Supervised Graph Representation Learning.} 
Recent advancements in representation learning have been driven by the SSL paradigm, where the goal is to ensure representations have high similarity between positive views of a sample and high dissimilarity between negative views. Existing SSL frameworks can be broadly categorized based on the mechanism adopted for enforcing this consistency: contrastive learning (CL) frameworks \cite{Chen20_SimCLR,Oord18_CPC,Tian20_CMC, You20_GraphContrastiveLearning, You21_GraphCLAutomated,suresh21_AdvGCL,Xia22_SimGRACE}, such as GraphCL\cite{You20_GraphContrastiveLearning}, use the InfoNCE loss; approaches that rely only on positive pairs, such as  SimSiam~\cite{Chen20_SimSiam} and BGRL~\cite{Thakoor21_BGRL} use Siamese architectures with stop gradient~\cite{Chen20_SimSiam} and asymmetric branches~\cite{Grill20_BYOL} respectively; SpecCL ~\cite{HaoChen_SpectralCL} uses a spectral clustering loss (SpecLoss) to enforce consistency; others attempt to directly reduce redundancy between views~\cite{barlow_zbontar21,Bardes21_VICReg}. Despite these differences, all methods rely upon data augmentation to generate positive views, which are assumed to share semantics. Generic graph augmentations (GGAs)~\cite{You20_GraphContrastiveLearning} are a popular strategy and assume limited changes to a graph's node features or topology are unlikely to alter its label. GGAs include random node dropping, edge perturbation, masking node attributes and sampling subgraphs. Other strategies include using diffusion matrices~\cite{Hassani20_MVGRL}, GGAs with a non-uniform prior, automated methods which rely upon bi-level optimization ~\cite{You21_GraphCLAutomated} or adversarial optimization~\cite{suresh21_AdvGCL}, and implicit methods, such as SimGRACE \cite{Xia22_SimGRACE}, which use weight-space perturbations as augmentations. Here, we primarily focus on GGAs due to their popularity, simplicity and effectiveness. Please see App. \ref{app:related} for additional discussion about augmentation paradigms.

\textbf{Theoretical Analsyis of SSL.}
Several different perspectives have recently been used to successfully analyze SSL's behavior, including learning theory~\cite{HaoChen_SpectralCL, Arora19_TheoreticalAnalysis,Huang_sslgen21}, causality~\cite{Zimmermann_Inverts, Kugelgen_ProvablySeparates}, information theory~\cite{Tsai21_SSLMultiview}, and loss landscapes~\cite{Tian21_NoContrast, jing2021understanding, pokle2022contrasting, ziyin2022shapes}.
%We build on these works to derive our results in Sec.~\ref{sec:roleofpag} and design our synthetic data generation process Sec.~\ref{sec:stylevcontent}. 
Many of these analyses assume, either implicitly \cite{Zimmermann_Inverts,Huang_sslgen21} or explicitly \cite{HaoChen_SpectralCL,Wei21_SelfTraining,Balcan04,lubana2022orchestra}, the existence of a latent space that is \emph{invariant} to augmentation functions and supports the properties of \emph{recoverability} and \emph{separability} (also see \autoref{fig:data_props}). 

\textit{Invariance to Augmentations:} Producing similar representations for positive views, i.e., augmentations, induces invariance to the corresponding transformation function. Indeed, if augmentations are related by properties that are \textit{not relevant} to the downstream task, representations will become \textit{invariant} to this relationship over the course of SSL training and generalization will improve~\cite{Saunshi22_CLInductiveBias,Tian20_WhatMakesForGoodViews}. Conversely, however, if augmentations induce invariance to \textit{relevant} properties, then representations will fail to represent this information and are likely to lose task performance (e.g, color invariance is harmful when classifying different Labradors)~\cite{Purushwalkam20_DemystifyingCL, Trivedi21_AugCL}. This latter point is often ignored by the theoretical analyses mentioned above. We note Tian et al.'s information theoretic framework~\cite{Tian20_WhatMakesForGoodViews} is a notable exception to this critique; we discuss the limitations of their results in App.~\ref{app:infomin_disc}. 

\textit{Recoverability and Separability:} These properties state that in the latent space which instantiates the data generation process, two augmentations of a sample are close to each other (e.g., a clear and blurry dog) and unrelated points (e.g., dogs and cats) are sufficiently separated from each other. It is often implicitly assumed that only task-relevant augmentations are allowed~\cite{HaoChen_SpectralCL, Wei21_SelfTraining}. While originally proposed for manifolds~\cite{Balcan04}, both recoverability and separability have been recently converted to graph connectivity properties~\cite{HaoChen_SpectralCL} and verified empirically on modern deep learning methods~\cite{Wei21_SelfTraining}. Specifically, recoverability and separability can be used to bound generalization error on unseen data and we demonstrate how this can be done for graph CL in Sec.~\ref{sec:roleofpag}. 

\textbf{Notations.} 
Let $\natdata$ be a natural dataset with $r$ different classes. Our use of word natural implies the samples in this dataset were collected via a natural sensing process (e.g., molecules from wet-lab experiments or scene graphs from images). We use $\augp{.}{\ngraph}$ to denote the distribution of augmentations for the sample $\ngraph \in \natdata$. Here, $\augp{\agraph}{\ngraph}$ represents the probability of generating a particular augmentation $\agraph$, and $\adata := \cup_{\nsample \in \pndata} \aug{\ngraph}$ is the set of all samples transformed via our set of augmentation functions. Let $f: \adata \rightarrow \mathbb{R}^d$ be a feature extractor, where $f(x)$ can be used for downstream tasks. Unless otherwise noted, let $\ngraph$ be a natural (graph) sample from $\natdata$, $\augp{\cdot}{\cdot}$ be some GGA, and $\agraph \sim \augp{\cdot}{\ngraph}$ be an augmented graph generated using a given GGA. $\mathcal{V}_{\ngraph}$ and $\mathcal{E}_{\ngraph}$ correspond, respectively, to the node and edge sets of $\ngraph$. We note our generalization analysis will specifically focus on the recently proposed contrastive loss by HaoChen et al.~\cite{HaoChen_SpectralCL}, called SpecLoss ($\Loss{f}$), which we define as follows: let $\agraph \sim \augp{\cdot}{\ngraph}$, $\agraph^+ \sim \augp{\cdot}{\ngraph}$, given $\ngraph \in \ndata$, and $\agraph^- \sim \augp{\cdot}{\ngraph'}$, given $\ngraph' \sim \pndata \land \ngraph' \neq \ngraph$. Then, for the positive/negative pairs $(\ngraph,\agraph^+)/(\ngraph,\agraph^-)$, SpecLoss is: $\Loss{f}$ = 
$    -2\cdot \Exp{\ngraph, \agraph^+} \big[f(\ngraph)^\top f(\agraph^+) \big]
	+ \Exp{\ngraph, \agraph^-}\left[\left(f(\ngraph)^\top f(\agraph^-) \right)^2\right] \nonumber
$.
In a contemporary work, Saunshi et al.~\cite{Arora19_TheoreticalAnalysis, Saunshi22_CLInductiveBias} developed a generalization analysis for general contrastive loss functionals, including SpecLoss. Our analysis has a similar algorithmic flow as Saunshi et al.'s and hence the takeaways from our work can be easily extended for other contrastive methods as well. We provide additional discussion of this extension in App. \ref{app:otherlosses}.

\section{Generalization Bounds for CL with GGA}\label{sec:roleofpag}
\begin{figure}
    \centering
    % \vspace{-0.2cm}
    \includegraphics[width=0.99\textwidth]{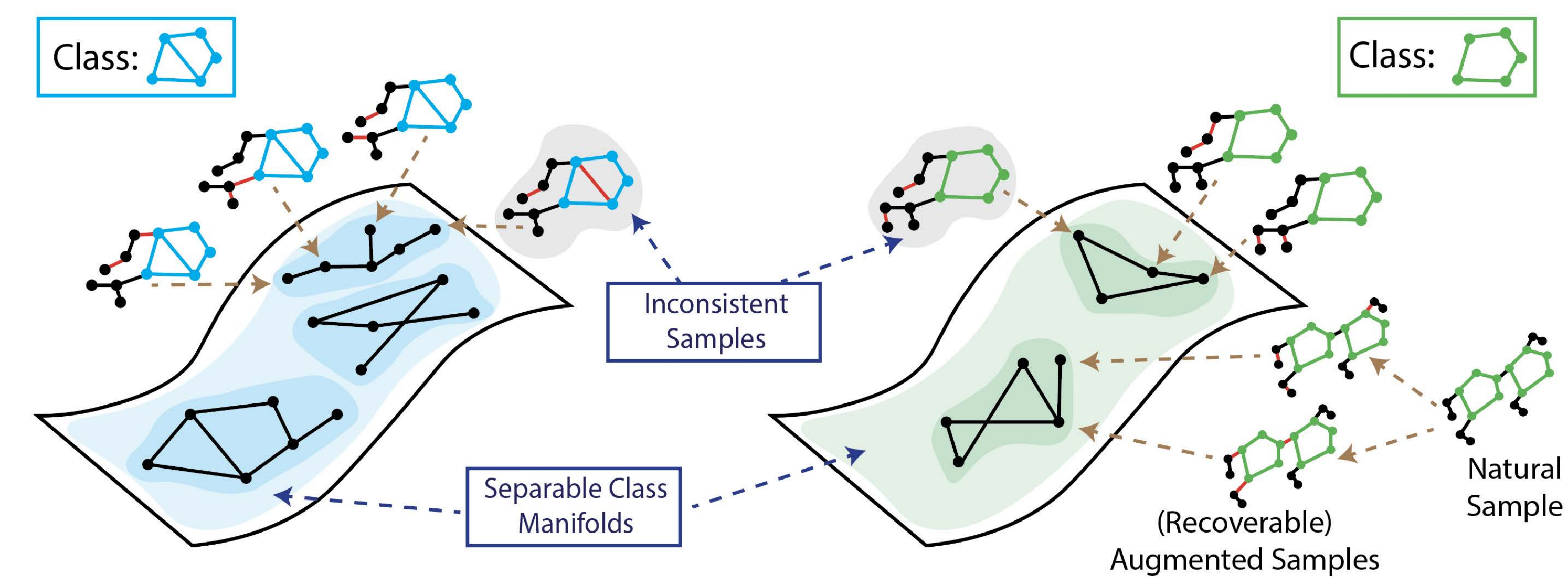}
    % \vspace{-0.3cm}
    \caption{\textbf{Illustrating data-centric properties forming the core of our assumptions.} {Our generalization analysis (Sec. \ref{sec:roleofpag}) relies upon several data-centric properties, namely recoverability, separability, and frequency of inconsistent samples. Here, we illustrate these properties via a figure. (i) \textbf{Separability:} Samples from different classes should be \textit{separable}, as illustrated by the existence of separate manifolds for different classes. This property helps assume the existence of a classifier $h$ that can classify natural samples with low error. (ii) \textbf{Recoverability:} Labels of augmented samples should be \textit{recoverable} from the original samples from which they were generated. This entails that augmentations generated from the same original samples are expected to be closer in latent space than two arbitrary samples, which will likely correspond to different classes. This property helps assume a constraint on the classifier $h$ that it must also classify the augmentations of a sample to the same class as that of the sample. (iii) \textbf{Inconsistent Samples:} While the likelihood of generating augmentations that alter class semantics is low for image data, this if often note the case in graphs, especially when using generic graph augmentations. We refer to augmentations that can be generated from original samples belonging to different classes as \textit{inconsistent}, and demonstrate that graph edit distance can be used to identify such samples. Overall, our theory shows inconsistent samples decrease separability and recoverability, harming generalization. (Figure inspired from Chung et al.~\cite{fig_motiv} and HaoChen et al.~\cite{HaoChen_SpectralCL}.)}}
    \label{fig:data_props}
    \vspace{-10pt}
\end{figure}

As discussed above, recent analyses have found that SSL generalization error can be bounded under the assumptions of invariance to relevant augmentations, recoverability, and separability. In this section, we demonstrate how GGAs influence these properties by deriving a generalization bound tailored for graph data. Notably, this bound allows us to demonstrate conditions where using GGAs results in low separability and recoverability, motivating the need for augmentations that induce task-relevant invariances that go beyond generic perturbative graph transformations. 

\textbf{Key Insight:} Our main idea for the following analysis is that \textit{GGAs can be instantiated in a general manner as a composition of graph edit operations}. This allows us to derive a unifying assumption related to recoverability and separability in terms of the graph edit distance (GED) between samples. Moreover, because GED amongst samples is a property intrinsic to the dataset, we can now discuss how the tightness of a SSL generalization error bound (SpecLoss's, specifically) will change as a function of GED between samples of underlying classes and augmentation strength. 

We begin by defining GED and explaining how GGAs can be represented using graph edit operators. 
\begin{definition}[Graph Edit Distance]
\label{def:ged}
Let the elementary graph operators comprise the set of graph edits: these include \textit{node insertion}, \textit{node deletion}, \textit{edge deletion}, \textit{edge addition}, and an additional \textit{categorical feature replacement} operator. Then, $
GED\left(g_{1}, g_{2}\right)=\min _{\left(e_{1}, \ldots, e_{k}\right) \in \mathcal{P}\left(g_{1}, g_{2}\right)} \sum_{i=1}^{k} c\left(e_{i}\right)$,
where $\mathcal{P}\left(g_{1}, g_{2}\right)$ is the set of paths (series of edit operations) that transforms graph $g_1$ to be isomorphic to graph $g_2$. Here, $e_i$ is $i$-th edit operation in the path, and $c(e_i)$ is the cost for performing the edit. 
\end{definition}

As shown in Table~\ref{tab:aug_vs_operator}, frequently used GGA transforms can be easily decomposed using standard graph edit operators described in Def.~\ref{def:ged}. For example, assuming each operator has a unit cost, the edge perturbation augmentation can be seen as applying the minimum cost path consisting of edge deletion and edge addition operations to obtain $\agraph$ from $\ngraph$. Further, augmentation strength and the set of possible augmentations for a given natural sample can also be expressed in terms of GED: 

\begin{lemma}\textbf{Allowable augmentations can be expressed using GED.}\label{lemma:augstrength_vs_ged}
Let $\gamma$ represent augmentation strength or the fraction of the graph that GGAs may modify.
Then,  $\delta \in \{\lfloor\gamma\vert \mathcal{V}_{\ngraph}\vert\rfloor,\lfloor\gamma\vert \mathcal{E}_{\ngraph}\vert \rfloor \}$ represents the number of discrete, allowable modifications for the specified GGA, so $GED(\ngraph,\agraph) \leq \delta.$ Correspondingly, we have $\agraph \in \mathcal{A}(\ngraph) \Leftrightarrow GED(\agraph, \ngraph) \leq \delta$.
\end{lemma}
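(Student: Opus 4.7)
The plan is to lean entirely on the decomposition shown in Table~\ref{tab:aug_vs_operator}, which realizes each generic graph augmentation (node dropping, edge perturbation, attribute masking, subgraph sampling) as a composition of the elementary edit operators from Definition~\ref{def:ged}. Once that bridge is in place, bounding $GED(\ngraph, \agraph)$ reduces to counting how many elementary operators a GGA with strength $\gamma$ is permitted to invoke.

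First I would walk through each GGA in turn and verify the counting. For node-based GGAs (node dropping, attribute masking), the number of affected nodes is by definition $\lfloor \gamma|\mathcal{V}_{\ngraph}|\rfloor$, and each affected node corresponds to exactly one elementary operator (node deletion, or categorical feature replacement). For edge-based GGAs (edge perturbation), the number of affected edges is $\lfloor \gamma|\mathcal{E}_{\ngraph}|\rfloor$, and each is realized by an edge deletion or edge addition. Subgraph sampling decomposes into the node deletions required to restrict $\ngraph$ to the sampled set, again bounded by $\lfloor \gamma|\mathcal{V}_{\ngraph}|\rfloor$. In every case, assigning unit cost per operator gives a concrete edit path in $\mathcal{P}(\ngraph,\agraph)$ of total cost at most $\delta$, and since $GED$ is the minimum cost over \emph{all} paths in $\mathcal{P}(\ngraph,\agraph)$, this path witnesses $GED(\ngraph,\agraph)\leq \delta$. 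This establishes the forward ($\Rightarrow$) implication of the biconditional as a direct corollary.

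For the reverse ($\Leftarrow$) implication, I would argue contrapositively at the level of operator sequences: any graph $\agraph$ with $GED(\agraph,\ngraph) \leq \delta$ is realizable by some minimum-cost path of at most $\delta$ elementary edits, and since the GGA's editing vocabulary matches exactly the elementary operator types used in the GED path (with the choice of which node/edge to modify left free under uniform sampling), every such path occurs with strictly positive probability under $\aug{\ngraph}$. Hence $\agraph\in\mathcal{A}(\ngraph)$.

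The main obstacle is the reverse direction, since it implicitly requires the GGA to be \emph{expressive enough} to emit any edit sequence of length $\leq \delta$ consistent with its operator type. For a single GGA that is restricted to (say) only edge deletion, the reverse inclusion would fail in general (e.g., $\agraph$ could differ from $\ngraph$ only via added edges). I would handle this by interpreting $\mathcal{A}(\cdot)$ as the GGA family corresponding to the operator set implied by $\delta$'s definition, i.e., edge-based GGAs when $\delta=\lfloor\gamma|\mathcal{E}_{\ngraph}|\rfloor$ and node-based GGAs when $\delta=\lfloor\gamma|\mathcal{V}_{\ngraph}|\rfloor$; this matches the GraphCL-style composition used in the paper's empirical setup and makes the biconditional tight. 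Under that reading, the lemma follows purely from the operator-level decomposition and the definition of GED, with no additional probabilistic machinery required.
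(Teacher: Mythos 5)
Your forward direction is essentially the paper's own proof, just spelled out per-augmentation: the paper's argument is exactly that the GGA performs at most $\delta$ discrete modifications, each realized by a unit-cost operator from Table~\ref{tab:aug_vs_operator}, so the resulting edit path has cost at most $\delta$ and hence witnesses $GED(\ngraph,\agraph)\leq\delta$. Where you genuinely diverge is the reverse implication of the biconditional: the paper's proof stops after the forward bound and never argues that $GED(\agraph,\ngraph)\leq\delta$ implies $\agraph\in\mathcal{A}(\ngraph)$, leaving that half of the ``$\Leftrightarrow$'' implicit. Your contrapositive-style argument at the level of operator sequences supplies the missing half, and --- more importantly --- you correctly isolate the hypothesis it needs, namely that the GGA's operator vocabulary is rich enough to emit every edit sequence of length at most $\delta$ consistent with the relevant operator type (your edge-deletion-only counterexample is exactly the failure mode). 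Your resolution, reading $\mathcal{A}(\ngraph)$ as the augmentation family whose operator set matches the choice of $\delta$ (edge-based when $\delta=\lfloor\gamma|\mathcal{E}_{\ngraph}|\rfloor$, node-based when $\delta=\lfloor\gamma|\mathcal{V}_{\ngraph}|\rfloor$), is consistent with how the paper uses the lemma downstream (e.g., in Corollary~\ref{lemma:connectivity_vs_GED} and the counting of $|\mathcal{A}(\ngraph)|$), so your version is the more complete proof: it buys an explicit statement of the expressiveness assumption that the paper relies on silently, at the cost of a slightly longer case analysis. No probabilistic machinery is needed, as you note, since membership in $\mathcal{A}(\ngraph)$ is a support condition rather than a statement about $\augp{\agraph}{\ngraph}$.
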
\vspace{-5pt}
\setlength{\intextsep}{0pt}%
\setlength{\columnsep}{10pt}%
\begin{wraptable}{R}{6cm}
\centering
\caption{\textbf{Generic Graph Augmentations vs. Graph Edit Operators.} GGAs can be straightforwardly expressed using graph edit operators. Please see \autoref{app:pag} for a detailed discussion.}
\resizebox{6cm}{!}{
\label{tab:aug_vs_operator}
 \begin{tabular}{l|l}
 \toprule
      \textbf{Augmentations} & \textbf{Graph Edit Operators}  \\
 \midrule
 Node Dropping & Node Deletion \\
 Edge Perturbation & Edge Deletion, Edge Addition \\
 Categorical Attribute Masking & Feature Masking Operator  \\
 Sub-graph Sampling & Node Deletions \\
 \bottomrule
\end{tabular}
}
\end{wraptable}
For example, consider a graph $\agraph \sim \augp{\cdot}{\ngraph}$, generated via node dropping. Then, $\agraph$ contains $1- \delta$ nodes and the minimum cost path to transform $\ngraph$ to $\agraph$ contains only $\delta$ ``node deletion'' operations. Further, all augmentations generated from $\ngraph$ will have $1- \delta$ nodes and thus have $GED(\ngraph,\agraph) \leq \delta$. In \autoref{app:pag}, we prove the above statement and demonstrate how to approximate $|\mathcal{A}(\ngraph)|$ (e.g., the set of allowable augmentations for a given natural sample) using a combinatorial, counting procedure that is dependent on $\delta$. Because GGAs are applied randomly, note that the probability of a generating a particular augmentation is $\augp{\agraph}{\ngraph} \approx \frac{1}{|\mathcal{A}(\ngraph)|}$. Given these definitions, we now derive a unifying assumption in terms of GED between samples. We begin by formally introducing the separability and recoverability assumptions, focusing on the recently proposed, unified version~\cite{HaoChen_SpectralCL}:

\begin{assumption}[Separability plus Recoverability Assumption, (Assm. 3.5 in \cite{HaoChen_SpectralCL})]
\label{dfn:recoverability}
Let  $\ngraph \in \ndata$ and $y(\ngraph)$ be its label, and  
$\agraph \sim \augp{\cdot}{\ngraph}$. Assume that there exists a classifier $\clf$, such that $\clfx{\agraph} = y(\ngraph)$ with probability at least $1- \alpha$. We refer to $\alpha$ as the error of $\clf$.
\end{assumption}

See \autoref{fig:data_props} a visualization explaining this assumption. Intuitively, Assm. \ref{dfn:recoverability} states that there must exist a classifier $h$ that is able to associate a sample's label with its augmentations, hence enabling recoverability, i.e., representations of augmentations are close to each other. Meanwhile, by ensuring augmentations of samples from a class with label ``A'' are classified as ``A'' and from a class with label ``B'' are classified as ``B'', the assumption simultaneously enables separability, i.e., representations of samples from different classes should be dissimilar. As we will see, the generalization bound will be a function of $\alpha$, the probability that a classifier satisfying Assm.~\ref{dfn:recoverability} associates augmentations of a class's samples with another class. As $\alpha$ grows larger, the generalization error bound becomes less tight. Therefore, it is important to understand how the choice of augmentation and augmentation strength ($\gamma$) can influence the error of $\clf$.
We show one can also understand $\alpha$ as a trade-off between inter-class GED of samples and augmentation strength. 

Intuitively, $\clf$ will incur error on augmented samples that can be generated from a set of natural samples that belong to different underlying classes, as it is unclear how these samples should be embedded in a latent space.  We now formally define such samples. First, using Lemma~\ref{lemma:augstrength_vs_ged}, we can determine if two augmentations could have been generated from the same sample. 
\begin{corollary} (Co-occuring augmentations.)
\label{lemma:connectivity_vs_GED}
Let $\ngraph \in \natdata$ and $\agraph, \agraphp \in \adata$. Then,   $\agraph \sim \mathcal{A}(\ngraph) \land \agraphp \sim \mathcal{A}(\ngraph) \Leftrightarrow GED(\agraph,\agraph^{\prime}) \leq 2\delta$, where $\delta = \min\{\lfloor\gamma\vert \mathcal{V}_{\ngraph}\vert\rfloor,\lfloor\gamma\vert \mathcal{E}_{\ngraph}\vert \rfloor \,\lfloor\gamma\vert \mathcal{V}_{\agraph}\vert\rfloor,\lfloor\gamma\vert \mathcal{E}_{\agraph}\vert \rfloor \}.$
\end{corollary}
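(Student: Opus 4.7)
The plan is to establish both directions of the biconditional by combining Lemma~\ref{lemma:augstrength_vs_ged} with the triangle inequality for graph edit distance. The triangle inequality is immediate from the definition of GED as a shortest-path cost over unit-cost elementary operators: concatenating an edit path from $\agraph$ to $\ngraph$ with one from $\ngraph$ to $\agraphp$ produces a (not necessarily optimal) path from $\agraph$ to $\agraphp$ whose cost bounds $GED(\agraph,\agraphp)$.

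For the forward direction ($\Rightarrow$), I would proceed directly. Assuming $\agraph \in \mathcal{A}(\ngraph)$ and $\agraphp \in \mathcal{A}(\ngraph)$, Lemma~\ref{lemma:augstrength_vs_ged} gives $GED(\ngraph,\agraph) \leq \delta_{\ngraph}$ and $GED(\ngraph,\agraphp) \leq \delta_{\ngraph}$, where $\delta_{\ngraph} \in \{\lfloor\gamma|\mathcal{V}_{\ngraph}|\rfloor, \lfloor\gamma|\mathcal{E}_{\ngraph}|\rfloor\}$. Triangle inequality then yields $GED(\agraph,\agraphp) \leq 2\delta_{\ngraph}$. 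Since the $\delta$ in the corollary is the minimum over the four quantities tied to both $\ngraph$ and $\agraph$, it upper-bounds $\delta_{\ngraph}$, so $GED(\agraph,\agraphp) \leq 2\delta$ follows.

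For the converse ($\Leftarrow$), the strategy is to exhibit a common ancestor $\ngraph$ by walking halfway along a minimum-cost edit path. Let $(e_1,\ldots,e_k)$ be an optimal path from $\agraph$ to $\agraphp$ with total cost $k = GED(\agraph,\agraphp) \leq 2\delta$, and define $\ngraph$ to be the graph obtained by applying the prefix $(e_1,\ldots,e_{\lfloor k/2 \rfloor})$ to $\agraph$. By construction, $GED(\agraph,\ngraph) \leq \lfloor k/2 \rfloor \leq \delta$ and $GED(\ngraph,\agraphp) \leq \lceil k/2 \rceil \leq \delta$. Invoking the sufficient direction of Lemma~\ref{lemma:augstrength_vs_ged} with $\ngraph$ as the natural sample then gives $\agraph,\agraphp \in \mathcal{A}(\ngraph)$.

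The main obstacle is verifying that the intermediate graph $\ngraph$ produced in the converse has vertex and edge counts compatible with the augmentation strength $\gamma$, so that the individual constraints $\lfloor\gamma|\mathcal{V}_{\ngraph}|\rfloor$ and $\lfloor\gamma|\mathcal{E}_{\ngraph}|\rfloor$ are indeed satisfied when Lemma~\ref{lemma:augstrength_vs_ged} is reapplied from $\ngraph$. This is precisely why the statement defines $\delta$ as a \emph{minimum} over the four quantities associated with both endpoints: since operators such as node deletion and addition change the underlying counts, taking the min ensures that the bound $\delta$ simultaneously respects the augmentation-strength budget at $\ngraph$, $\agraph$, and $\agraphp$. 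I would close this gap by a short case analysis on the operator types along the edit path (node insertion/deletion and edge addition/deletion), showing that the size of $\ngraph$ is always sandwiched between those of $\agraph$ and $\agraphp$ in the relevant quantity, and therefore the min-based $\delta$ dominates the per-graph bound needed to re-invoke the lemma.
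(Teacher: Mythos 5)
Your forward direction is exactly the paper's argument: apply Lemma~\ref{lemma:augstrength_vs_ged} to each of $\agraph$ and $\agraphp$ to get $GED(\ngraph,\agraph)\leq\delta$ and $GED(\ngraph,\agraphp)\leq\delta$, then chain the two edit paths through $\ngraph$ to bound $GED(\agraph,\agraphp)$ by $2\delta$. The paper's proof in fact stops there --- it never addresses the $\Leftarrow$ direction of the stated biconditional --- so you are attempting strictly more than the paper does.

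Your converse, however, has a genuine gap that your proposed case analysis on operator types will not close. The statement fixes a particular natural sample $\ngraph\in\natdata$ up front, so the $\Leftarrow$ direction asserts that $GED(\agraph,\agraphp)\leq 2\delta$ forces both graphs to be augmentations of \emph{that} $\ngraph$. Your halfway-point construction produces some intermediate graph on the optimal edit path between $\agraph$ and $\agraphp$, but there is no reason this graph equals the given $\ngraph$, nor even that it belongs to $\natdata$ at all: $\natdata$ is a finite collection of naturally sensed samples, and an arbitrary midpoint of an edit path between two augmentations will generically be none of them. The GED condition only certifies the existence of \emph{some} graph within distance $\delta$ of both endpoints; it does not certify that this witness is a natural sample, let alone the particular one quantified in the statement. (Two augmentations of two different natural samples from the same class can easily satisfy $GED(\agraph,\agraphp)\leq 2\delta$ without sharing any common natural ancestor --- this is precisely the situation the paper later exploits when defining inconsistent samples via Definition~\ref{definition:inconsistent_sample} and Lemma~\ref{lemma:edgecrossing_vs_inconsistent}.) The size-compatibility issue you flag is real but secondary; the primary obstruction is that the witness you construct is not the right object. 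The honest conclusion is that only the $\Rightarrow$ implication is provable, which is also the only direction the paper itself establishes and the only direction used downstream (the GED test is used as a necessary condition for co-occurrence, i.e., to \emph{over}-count potentially inconsistent pairs).
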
  

Given the above result, we now define inconsistent samples as follows.

\begin{definition}[Inconsistent Samples]
\label{definition:inconsistent_sample}
Let $\agraph \in \adata$, and $y:\natdata \rightarrow r$ be a labeling function.
Further, let $\natdata_{in} = \{\ngraph | \ngraph \in \natdata \land GED(\agraph, \ngraph) \le \delta\}$ be the set of natural samples that may have generated $\agraph$ and $Y_{in}^* = \{ y(\ngraph) | \ngraph \in \natdata_{in}\}$ be the set of unique labels. If $\agraph$ is an inconsistent sample, $|Y_{in}^*| > 1$. 
\end{definition}

Essentially, if two augmentations co-occur (see Corr.~\ref{lemma:connectivity_vs_GED}) from two or more \textit{different} natural samples, such that the samples \textit{do not} share the same underlying label, we refer to such samples as \textit{inconsistent} (also see \autoref{fig:data_props}). Now, we assume the behavior of $\clf$ on inconsistent samples is fixed such that $\clf(\agraph) = y$, for some fixed $y \in Y_{in}^*$.
This allows us to use $\clf$ to induce a $r$-way partition over $\adata$, such that each sample, $\agraph$, belongs to a partition, $\mathbf{S}_h(\agraph)$. Further, because $\clf$ incurs error on inconsistent samples, $\alpha$ can be lower bounded by the ratio of inconsistent to total samples. To this end, we use GED to identify inconsistent samples by identifying disagreement amongst partitions as follows.

\begin{lemma}[Using GED to identify inconsistent samples]
\label{lemma:edgecrossing_vs_inconsistent}
Let $\agraph, \agraphp \in \adata$ and $GED(\agraph,\agraphp) \leq 2\delta$ such that $\agraph \in \mathbf{S}_i \land \agraphp \in \mathbf{S}_j$ and $i \neq j$, where partitions are induced by $\clf$. Then, at least one $\tilde{\agraph} \in \{\agraph,\agraphp\}$ must be an inconsistent sample.
\end{lemma}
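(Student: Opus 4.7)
The plan is to argue by contradiction, using Corollary~\ref{lemma:connectivity_vs_GED} to extract a common ``parent'' natural sample of $\agraph$ and $\agraphp$, and then showing that if neither augmentation is inconsistent, the classifier $\clf$ must place them in the same partition, contradicting $i \neq j$.

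First, I would apply Corollary~\ref{lemma:connectivity_vs_GED} to the hypothesis $GED(\agraph,\agraphp) \leq 2\delta$: this produces a natural sample $\ngraph \in \natdata$ with $\agraph, \agraphp \sim \aug{\ngraph}$, and by Lemma~\ref{lemma:augstrength_vs_ged} this means $GED(\agraph,\ngraph) \le \delta$ and $GED(\agraphp,\ngraph) \le \delta$. Hence $\ngraph$ lies in $\natdata_{in}$ of both $\agraph$ and $\agraphp$ (Def.~\ref{definition:inconsistent_sample}), so in particular $y(\ngraph)$ belongs simultaneously to $Y_{in}^*(\agraph)$ and to $Y_{in}^*(\agraphp)$.

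Next, suppose for contradiction that both $\agraph$ and $\agraphp$ are consistent. By Def.~\ref{definition:inconsistent_sample} this means $|Y_{in}^*(\agraph)| = |Y_{in}^*(\agraphp)| = 1$, so each of these singleton sets must equal $\{y(\ngraph)\}$. Under Assm.~\ref{dfn:recoverability} together with the paper's convention that the error budget $\alpha$ of $\clf$ is absorbed entirely by inconsistent samples, the value of $\clf$ on any consistent input must coincide with the unique element of its $Y_{in}^*$. Therefore $\clfx{\agraph} = y(\ngraph) = \clfx{\agraphp}$, which forces the partition indices to agree, $i = j$, contradicting the hypothesis. Hence at least one of $\agraph, \agraphp$ must satisfy $|Y_{in}^*| > 1$ and is therefore inconsistent.

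The main obstacle I anticipate is making precise how the probabilistic guarantee of Assm.~\ref{dfn:recoverability} (``$\clfx{\agraph} = y(\ngraph)$ with probability $1-\alpha$'') is read as a deterministic identity on consistent inputs. The resolution is to invoke the convention stated immediately before the lemma --- that $\clf$ is pinned on inconsistent samples, so that all of $\alpha$ is concentrated there --- which I would state explicitly at the outset and then use silently. Once that point is settled, the remainder of the argument is essentially a two-line pigeonhole combining Corollary~\ref{lemma:connectivity_vs_GED} with Def.~\ref{definition:inconsistent_sample}.
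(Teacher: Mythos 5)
Your proposal is correct and follows essentially the same route as the paper's proof: both extract a common parent $\ngraph$ from $GED(\agraph,\agraphp)\le 2\delta$ via Corollary~\ref{lemma:connectivity_vs_GED}, note that $y(\ngraph)$ lies in the $Y_{in}^*$ set of each augmentation while the classifier's outputs $i\neq j$ must also lie in those sets, and conclude that at least one set has size greater than one. The only difference is presentational — you argue by contradiction where the paper argues directly via a case split on the parent's label — and you are, if anything, more explicit than the paper about the needed convention that $\clf$ always outputs an element of $Y_{in}^*$.
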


Note that the above lemma does not rely on ground-truth label information to identify inconsistent samples, but only GED from natural samples. Given that the error on inconsistent samples is irreducible, as it is unclear which $y \in Y_{in}$ is correct, we can lower bound the error of $h$ as follows:
\begin{corollary}[Error bound due to Inconsistent Samples]
\label{cor:recov_incon}
The error of $h$ can be lower-bounded as 
\begin{equation}
\footnotesize
\alpha \ge \frac{\sum_i^r \sum_{\agraph \in S_i,\agraphp \notin S_i} \mathds{1}(GED(\agraph,\agraphp)\leq 2\delta)}{|\adata|}.
\end{equation}
\end{corollary}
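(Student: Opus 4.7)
The plan is to lower-bound $\alpha$ by isolating the irreducible error contribution from inconsistent samples (Def.~\ref{definition:inconsistent_sample}) and then using Lemma~\ref{lemma:edgecrossing_vs_inconsistent} to detect such samples combinatorially via the GED-based partition-crossing count that appears in the numerator of the claim.

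First, I would unpack Assumption~\ref{dfn:recoverability}: $\alpha$ measures the failure rate $\Pr_{\ngraph \sim \pndata,\, \agraph \sim \augp{\cdot}{\ngraph}}[\clfx{\agraph} \neq y(\ngraph)]$. Because $\clf$ is a well-defined (single-valued) function on $\adata$, every inconsistent $\agraph$ is forced to be misclassified on at least one natural generator: by Def.~\ref{definition:inconsistent_sample} we have $|Y_{in}^*|>1$, yet $\clfx{\agraph}$ equals just one label, so at least one $\ngraph \in \natdata_{in}$ satisfies $y(\ngraph) \neq \clfx{\agraph}$. Hence each inconsistent sample contributes an irreducible term to the error expectation, and $\alpha$ is lower bounded by the appropriately weighted mass of inconsistent samples in $\adata$.

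Second, I would invoke Lemma~\ref{lemma:edgecrossing_vs_inconsistent} to witness inconsistent samples using only GED and the partition $\{S_i\}_{i=1}^r$ induced by $\clf$: every ordered pair $(\agraph,\agraphp)$ with $\agraph \in S_i$, $\agraphp \notin S_i$, and $GED(\agraph,\agraphp) \leq 2\delta$ certifies that at least one of its endpoints is inconsistent. By Corollary~\ref{lemma:connectivity_vs_GED}, the GED condition translates to the two augmentations co-occurring in some $\augp{\cdot}{\ngraph}$, so each crossing pair corresponds to a positive pair that $\clf$ cannot coherently classify. Summing the indicators over all such pairs and all partitions $S_i$ produces the numerator of the stated bound: a purely GED-computable quantity that under-counts the error-inducing mass in $\adata$. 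Dividing by $|\adata|$ then converts the count into an error rate, yielding the displayed inequality.

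The main obstacle I expect is the counting bookkeeping in the final step. Each inconsistent $\agraph$ can be witnessed by many crossing pairs (one for each close neighbor placed in a different partition by $\clf$), while each crossing pair only guarantees that \emph{some} endpoint is inconsistent, not both. To make the inequality valid rather than off by a multiplicity factor, I would use the uniform-augmentation approximation stated earlier ($\augp{\agraph}{\ngraph} \approx 1/|\mathcal{A}(\ngraph)|$) and a charging argument: attribute each crossing pair to a single generating natural sample guaranteed by Corollary~\ref{lemma:connectivity_vs_GED}, show that this attribution is injective into error events counted by $|\adata|\cdot\alpha$, and drop the non-negative within-partition error contributions. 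The rest of the manipulation is routine, but ensuring the charging is injective (so that the sum is genuinely a lower bound, not an overcount) is the step that requires real care.
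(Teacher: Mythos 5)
Your first two steps reproduce what the paper actually does: the paper offers no formal proof of this corollary, only the informal argument that $\clf$ must err on every inconsistent sample (since $|Y_{in}^*|>1$ but $\clfx{\agraph}$ is single-valued) and that Lemma~\ref{lemma:edgecrossing_vs_inconsistent} lets one detect inconsistency via GED-based partition crossings. Up to that point you and the paper are in agreement, and your unpacking of why inconsistency forces irreducible error is if anything more explicit than the paper's.

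Where you diverge is that you correctly identify the multiplicity problem — the numerator counts \emph{pairs} while the error counts \emph{samples} — and the paper does not resolve it: it simply declares that "the number of inconsistent samples can be approximated via" the double sum and moves on. Read literally, the displayed inequality can fail: a single inconsistent sample $\agraph$ with $m$ cross-partition neighbors within GED $2\delta$ contributes $m$ (or $2m$, if pairs are ordered) to the numerator but at most a bounded amount to $\alpha\cdot|\adata|$, so for dense neighborhoods the right-hand side can exceed $\alpha$. Your proposed repair — an injective charging of crossing pairs into error events — is the right instinct, but you do not carry it out, and it is doubtful it can be carried out without extra structure: Corollary~\ref{lemma:connectivity_vs_GED} guarantees a common generator $\ngraph$ for each crossing pair, but many crossing pairs share the same generator and the same inconsistent endpoint, so the map is not injective in general. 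A correct lower bound would need either to count distinct inconsistent samples (replacing the pair sum with, e.g., $\sum_i |\{\agraph\in S_i : \exists\,\agraphp\notin S_i,\ GED(\agraph,\agraphp)\leq 2\delta\}|$, up to the one-of-two-endpoints caveat of Lemma~\ref{lemma:edgecrossing_vs_inconsistent}) or to normalize by neighborhood sizes as Definition~\ref{col:partion_dissimilarity} does for partition dissimilarity. In short: you found a real gap, but it is a gap in the paper's own (absent) proof, and your sketch does not yet close it.
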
Here, the number of inconsistent samples can be approximated via $\sum_i^r \sum_{\agraph \in S_i, \agraphp \notin S_i} \mathds{1}(GED(\agraph,\agraphp)\leq 2\delta)$ and $|\adata|$ can be estimated using a combinatorial counting procedure. Thus, the above corollary reflects the fact that error on inconsistent samples cannot be reduced due to label un-identifiability.

As mentioned before, the generalization bound by HaoChen et al.~\cite{HaoChen_SpectralCL} for SpecLoss is a function of $\alpha$. Deriving a lower bound on $\alpha$ will allow us to comment exactly when error is likely to become vacuous. To this end, we need a final definition of \textit{partition dissimilarity} that induces a notion of clustering of similar datapoints in our analysis. 
\begin{definition}
[Partition Dissimilarity]
Let $S_1,\dots, S_r$ be an $r$-way partition of $\adata$. Then, we define the partition dissimilarity for a given partition as 
\begin{equation}
\footnotesize
\phi_\adata({S_i}) = \frac{\sum_{\agraph\in S,\agraphp \notin S} \mathds{1}(GED(\agraph,\agraphp)\leq 2\delta)}{\sum_{\agraph\in S}\vert\{\agraphp | GED(\agraph,\agraphp) \leq 2\delta\}\vert}.
\end{equation}\vspace{-10pt}
\label{col:partion_dissimilarity}
\end{definition}
Intuitively, we use the partitions induced by $h$ as a proxy for class labels and co-occurrence as a notion of similarity (see Lemma \ref{lemma:augstrength_vs_ged}). Then, the quality of the partition is determined by the ratio of the samples that belong to a given partition, but are also similar to samples from other partitions, to the total number of samples that are close to the partition. Note that partition dissimilarity is an often studied term in clustering problem and a general version of conductance, the property used for spectral clustering on a similarity graph which forms the basis of SpecLoss~\cite{HaoChen_SpectralCL}.

We are now ready to state our main result that re-derives the generalization error of SpecLoss in terms of GGAs, using the definitions of co-occurring pairs (Def.~\ref{lemma:connectivity_vs_GED}) and dissimilar partitions (Def.~\ref{col:partion_dissimilarity}). Notably, we will decompose bound in terms of the number of co-occurring augmentation-pairs \underline{within} the same partition and the number of pairs that cross partitions, which are defined respectively as, $\lambda = \withinpartoptset$, and $\mu = \crosspartoptset$.

\begin{theorem}[Generalization Bound for SpecLoss with GGA]
\label{theorem:genbound_specloss}
Assume the representation dimension $k \geq 2r$ and Assm. \ref{cor:recov_incon} holds for $\alpha \ge 0$. Let $F$ be a hypothesis class containing a minimizer $\globalminf$ of SpecLoss, $\Loss{f}$, which produces a $\lfloor k/2\rfloor$-way partition of $\mathcal{X}$ denoted by $\{S_{*}\}$. Let its most dissimilar partition have dissimilarity denoted by $\rho_{\lfloor k/2\rfloor} = \min_{i} \phi(S_{i} \in \{S_{*}\})$. Then, $\globalminf$ has a generalization error bounded as:
\begin{equation}
\footnotesize	
% \begin{align}
	\mathcal{E}(\globalminf)
	\le \widetilde{O}\left(\alpha/\rho^2_{\lfloor k/2\rfloor}\right) =  \widetilde{O}\left(\frac{r}{|\adata|}\left[\mu + 2\lambda + \frac{\lambda^2}{\mu}\right]\right),
% \end{align}
\end{equation}
\end{theorem}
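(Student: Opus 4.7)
My plan is to apply HaoChen et al.'s generalization theorem for SpecLoss as a black box and then instantiate its two ingredients---the classifier error $\alpha$ and the partition conductance $\rho$---using the GED-based machinery built earlier in this section. Their main theorem states that whenever Assm.~\ref{dfn:recoverability} holds with parameter $\alpha$ and $k \ge 2r$, the population minimizer of SpecLoss satisfies $\mathcal{E}(\globalminf) \le \widetilde{O}(\alpha / \rho_{\lfloor k/2\rfloor}^2)$, where $\rho_{\lfloor k/2\rfloor}$ is the smallest conductance among the $\lfloor k/2\rfloor$ partitions $\{S_*\}$ induced by $\globalminf$. The first inequality in the claim then follows once I verify that HaoChen's abstract augmentation graph corresponds, in our setting, to the GED-based co-occurrence graph of Corollary~\ref{lemma:connectivity_vs_GED}, and that their conductance specializes to our Def.~\ref{col:partion_dissimilarity}.

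Next I substitute GED-based expressions into $\alpha$ and $\rho$. For the numerator, Corollary~\ref{cor:recov_incon} furnishes the tightest admissible value of $\alpha$: summing the inconsistent-sample lower bound over the $r$ ground-truth classes yields $\alpha \asymp r\mu/|\adata|$. For the denominator, I unpack Def.~\ref{col:partion_dissimilarity} and observe that the denominator of $\phi(S_i)$ decomposes as the sum of within-partition and cross-partition co-occurring pairs, namely $\lambda_i + \mu_i$; consequently $\phi(S_i) = \mu_i/(\lambda_i + \mu_i)$, and the worst-case partition yields $\rho_{\lfloor k/2\rfloor} = \mu/(\lambda + \mu)$ once aggregated.

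Combining the two substitutions and expanding the square algebraically gives
\[
\frac{\alpha}{\rho_{\lfloor k/2\rfloor}^2} \;=\; \widetilde{O}\!\left(\frac{r\mu}{|\adata|} \cdot \frac{(\lambda + \mu)^2}{\mu^2}\right) \;=\; \widetilde{O}\!\left(\frac{r}{|\adata|}\left[\mu + 2\lambda + \frac{\lambda^2}{\mu}\right]\right),
\]
which matches the stated right-hand side exactly.

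The main obstacle I expect is the identification step: HaoChen's augmentation graph assigns continuous edge weights $\sum_{\ngraph}\pndata(\ngraph)\augp{\agraph}{\ngraph}\augp{\agraphp}{\ngraph}$, whereas Def.~\ref{col:partion_dissimilarity} uses the thresholded weight $\mathds{1}(GED(\agraph,\agraphp)\le 2\delta)$. Arguing that this 0/1 reduction preserves the conductance (and hence the bound) up to the $\widetilde{O}$ tolerance requires the near-uniformity $\augp{\agraph}{\ngraph} \approx 1/|\mathcal{A}(\ngraph)|$ noted after Lemma~\ref{lemma:augstrength_vs_ged}, together with bookkeeping of how $|\mathcal{A}(\ngraph)|$ varies across natural samples so that absolute counts of co-occurring pairs can stand in for weighted conductance. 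Everything downstream of that reduction is routine combinatorial algebra.
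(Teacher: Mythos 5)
Your proposal is correct and follows essentially the same route as the paper's proof: invoke HaoChen et al.'s bound $\widetilde{O}(\alpha/\rho^2_{\lfloor k/2\rfloor})$ as a black box, substitute $\alpha \approx r\mu/|\adata|$ from Corollary~\ref{cor:recov_incon} and $\rho = \mu/(\lambda+\mu)$ from Definition~\ref{col:partion_dissimilarity} (the paper writes the denominator as $\sum_{\agraph\in S_*} w_{\agraph}$ and only later expands it as $\mu+\lambda$, but this is the same identification), and expand the square. The weighted-versus-thresholded conductance issue you flag is exactly what the paper handles in its appendix via the PAG connectivity lemma and the near-uniformity $\augp{\agraph}{\ngraph}\approx 1/|\mathcal{A}(\ngraph)|$.
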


\textbf{Discussion.} By deriving expressions for $\alpha$ and $\phi$ as well as equivalently representing the original bound in terms of the more intuitive expressions, $\mu$ and $\lambda$, we can gain insights into several empirical and intuitive observations in graph CL. We will study these points further in Sec.~\ref{sec:stylevcontent} via a synthetic dataset that was motivated from the analysis above and allows more fine-grained evaluation.

\textit{Invariance and Relevance of Augmentations.}
GGAs assume that limited changes to a graph's structure will not alter its semantics and aggressively increasing augmentation strength will eventually harm generalization. However, through our analysis, we see that the generalization error bound is non-decreasing with respect to $\delta$ when $\frac{\lambda^2}{\mu} \le \mu$, i.e., the number of \underline{cross} partition pairs dominates the expression, as this ratio depends on $\delta$. Indeed, for some $\delta' = \delta + \epsilon$, where $\epsilon > 0$, $\mu_{\delta'} = \crosspartset + \sum_{\agraph \in S_i, \agraphp \notin S_i}\mathds{1}(2\delta \le GED(\agraph,\agraphp) \leq 2\delta + \epsilon) = \mu_\delta + \sum_{\agraph \in S_i, \agraphp \notin S_i}\mathds{1}(2\delta \le GED(\agraph,\agraphp)) \leq 2\delta + \epsilon$. Thus, the number of cross partitions is always non-decreasing with respect to $\delta.$ Thus, \textit{we clearly see that when augmentations are agnostic of the task, their corresponding invariances yield poor representations with vacuous generalization.}

\textit{Separability.} Our analysis also demonstrates that the success of a particular augmentation strength is dependent on the GED between samples belonging to different classes. Given that inter-class GED is an intrinsic dataset property that proxies dataset separability, this implies that there are combinations of datasets and augmentation strengths for which GGAs will necessarily incur vacuous bounds, even for low augmentation strengths. In such settings, augmentations that improve recoverability and induce task-relevant invariances are necessary to improve downstream task performance. While many works have conjectured that task-relevant graph augmentations will improve performance, ours is the first to demonstrate why they are needed. Indeed, in Sec. \ref{sec:empircalstudy}, we find that GGAs are unable to induce such invariances on benchmark datasets.

\textit{Recoverability.} As shown in Thm. \ref{theorem:genbound_specloss}, better recoverability will improve the tightness of the generalization bound. However, we see that from Coll.\ \ref{cor:recov_incon}, that recoverability will only decrease as $\delta$ increases and as discussed above, there exist datasets where GGAs are not amenable. This further motivates the need for task-relevant augmentations so that the effects of poor augmentations are disentangled from method performance. 

\section{Experimental Verification}
\vspace{-5pt}
In this section, we conduct experiments using both standard benchmarks (Sec. \ref{sec:empircalstudy}) and our proposed synthetic dataset generation process (Sec. \ref{sec:stylevcontent}) to empirically validate our theoretical conclusions. 
\vspace{-5pt}
\subsection{A Closer Look at the Effectiveness of Invariance to GGA}\label{sec:empircalstudy}
In Sec.~\ref{sec:roleofpag}, we demonstrated GGAs can harm generalization by influencing recoverability and separability. 
Though computing these properties directly is intractable on benchmark datasets, our analysis for graph datasets and prior works on vision~\cite{Kugelgen_ProvablySeparates,Zimmermann_Inverts,Purushwalkam20_DemystifyingCL} show that if augmentations induce invariances that are \textit{task-relevant}, downstream error should reduce. This corresponds to meaningfully related samples having similar representations (recoverable) and unrelated samples having dissimilar representations (separable). However, by using augmentations that perturb topology or features constrained to a small fraction of the original graph, existing graph SSL methods assume such perturbations are relevant to the downstream task. If this is the case, our analysis suggests we should see improvement in performance with increased invariance; else, we will witness no tangible correlation. 

\mypar{Experimental Setup:} We evaluate seven graph SSL methods on seven, popular benchmark datasets. Specifically, we use the following representative algorithms: (i) \textit{GraphCL} \cite{You20_GraphContrastiveLearning}, a popular and effective graph CL method; (ii) \textit{GAE}, Graph Autoencoder \cite{Kipf_VGAE} that uses a reconstruction cost to learn representations; (iii) Augmentation-Augmented Autoencoder ~\cite{Falcon_AAVAE}, which we adapt to graphs to create the Augmentation Augmented Graph Autoencoder (\textit{AAGAE}) that minimizes the reconstruction error between the reconstruction for an augmented sample and the original; (iv) \textit{SpecCL}, which uses the SpecLoss~\cite{HaoChen_SpectralCL} for contrastive training; (v) \textit{SimSiam} \cite{Chen20_SimSiam}, a positive-sample-only framework that uses stop gradient; (vi) \textit{BYOL} \cite{Grill20_BYOL}, which avoids negatives samples by using asymmetric branches alongside a stop gradient operation; and
(vii) \textit{Untrained representations}, which have been observed to be surprisingly competitive baselines for graph-based learning~\cite{kipf2017semi,xu2021self,suresh21_AdvGCL,Trivedi21_AugCL}. To the best of our knowledge, ours is the first work to evaluate AAGAE and SpecCL for graph SSL. We use the same augmentations and encoder architecture as GraphCL. We add a straight-through estimator \citep{Jang17_GumbelSoftmax} to GAE/AAGAE's decoder for better training. See \autoref{app:data} for further details.
\setlength{\intextsep}{2pt}%
\setlength{\columnsep}{8pt}%
\begin{wrapfigure}{R}{7cm}
\centering
\includegraphics[width=0.49\columnwidth]{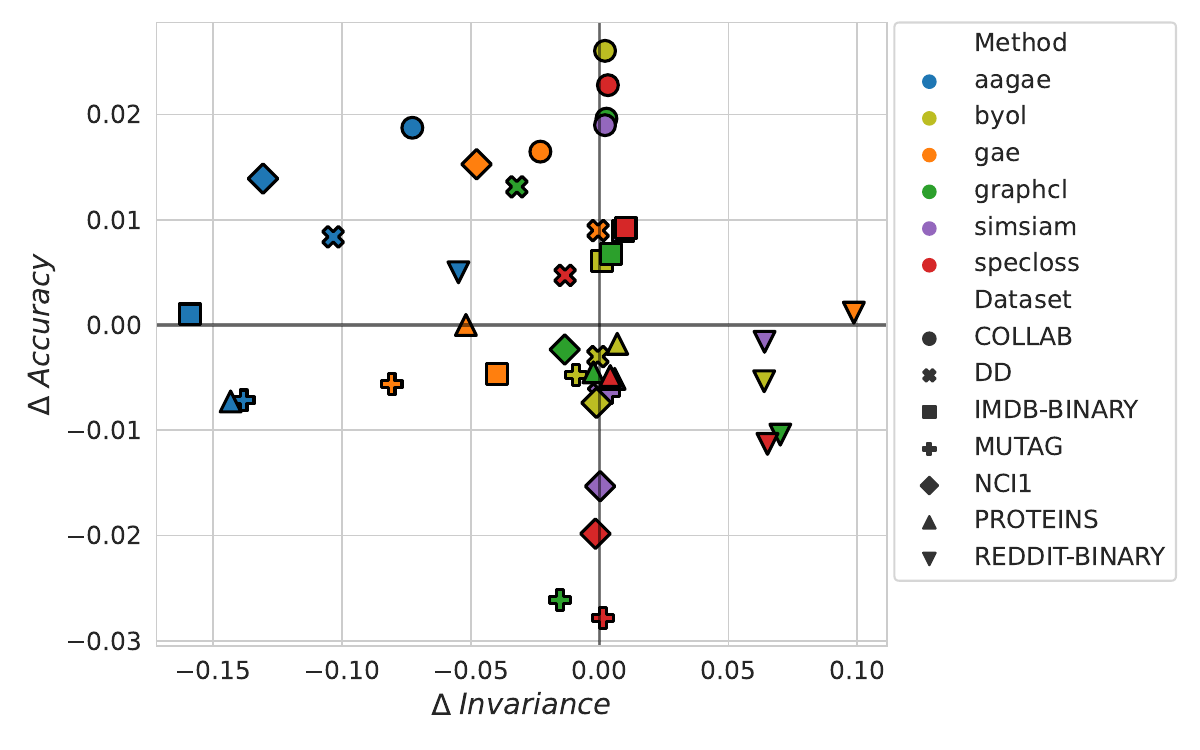}
\vspace{-0.2cm}
\caption{\label{fig:tu_inv_acc}\textbf{Invariance vs. KNN Acc.} The change in invariance (Inv.) and accuracy w.r.t. to an untrained model is plotted, where Inv. is measured according to \cite{Wang_UniformityAlignment}.
We see: Inv. has not significantly increased for many datasets/methods, improved Inv. does not necessarily entail better performance (see Reddit), and AAGAE/GAE often sees decreased Inv., likely due to use of a decoder.
}
\end{wrapfigure}

\textbf{GGAs fail to induce task-relevant invariance on standard benchmarks.} 
To measure whether augmentations have induced invariance, we measure recoverability using the representational similarity measures introduced by Wang and Isola~\cite{Wang_UniformityAlignment}. Called Alignment and Uniformity, the two measures are a generalized version of the InfoNCE loss and also encompass other contrastive losses, such as SpecLoss. Formally, alignment is defined as: $\mathcal{L}_{\text {align }}(f ; \mathcal{A}) \triangleq \mathbb{E}_{(\agraph,\agraphp) \sim  \augp{\cdot}{\ngraph} }\left[\|f(\agraph)-f(\agraphp)\|_{2}^{2}\right].$ To determine if the invariance is task-relevant, we determine if improved alignment is indicative of improved performance with respect to an untrained baseline model. 

\textit{Results.} Fig.~\ref{fig:tu_inv_acc} shows the difference in invariance and $k$NN with respect to an untrained model's accuracy, averaged over 10 seeds. 
As can be seen, there is not noticeable correlation between invariance and accuracy, especially with respect to the untrained baseline. 
Notably, on the Reddit dataset, all methods have improved invariance, but do not have significantly better kNN accuracy. 
Overall, this experiment demonstrates that learning invariance to GGAs is both difficult and often unrelated to task performance, clearly indicating the GGAs struggle to induce task-relevant invariances and do not support recoverable, separable latent spaces needed for good generalization. Moreover, given that GGAs have unknown recoverability on standard datasets, and that trained models were not able to sufficiently outperform untrained baselines, there is need for new datasets where it is possible to go beyond GGA and where we can better understand the merits of different graph SSL paradigms.\vspace{-5pt}

\subsection{Evaluating Graph SSL Methods in a Controlled Setting}\label{sec:stylevcontent}\vspace{-2pt}
\begin{figure*}
    \centering
    % \vspace{-0.2cm}
    \includegraphics[width=0.8\textwidth]{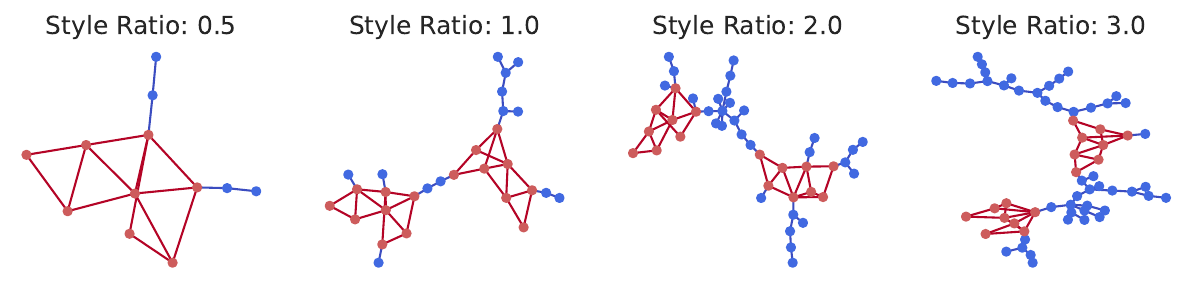}
    % \vspace{-0.3cm}
    \caption{\textbf{Synthetic Dataset Generation.} 
    A \textcolor{red}{class-specific motif} completely determines the label, and is therefore considered ``content". To vary the amount of style, the size of the \textcolor{blue}{background tree graph} is a ratio of the number of ``content'' nodes. Our dataset goes beyond binary benchmarks and allows for content-aware augmentations, a critical component to understanding graph SSL.
    }
    \vspace{-10pt}
    \label{fig:style_vs_caption}
\end{figure*}

Our analysis indicates the role played by recoverability and separability under task-relevant invariances dramatically influences generalization performance. However, given our results that GGAs do not enable these properties and the fact that task-relevance is difficult to define on existing benchmark datasets, empirical verification of our claims requires a dataset that directly enables control over the data generation process. We thus introduce a synthetic dataset that allows us to illustrate how invariance and class separability must be jointly considered when designing augmentations.
\vspace{-0.13cm}
\subsubsection{Synthetic Data Generation Process}
Given that standard benchmark datasets and augmentation practices are uninformative when evaluating the recoverability and invariance of augmentations, we propose a synthetic data generation process that allows us to understand how the data-dependent assumptions of SSL hold for graph datasets. This process not only enables oracle augmentations where recoverability is known, but also allows us some control over dataset separability.

Our synthetic dataset generation process is designed in accordance to a latent variable model which assumes that the underlying data generation latent representation space can be partitioned into \emph{style} and \emph{content}. Here, \emph{style} represents information that is irrelevant to the downstream task and can be perturbed (i.e., augmented) without changing sample semantics, while \emph{content} represents task-relevant information and should be preserved. We note that while von K{ü}gelgen et al.~\cite{Kugelgen_ProvablySeparates} used the same latent variable model to demonstrate that SSL with data augmentation is able to recover features which disentangle \emph{style} vs.\ \emph{content}, our objective for using this perspective is to develop a grounded benchmark that provides adjustable knobs over content (task-relevant) and style (task-irrelevant) information. These knobs allow us to understand how data-centric properties affect the performance of different graph SSL algorithms (see Fig. \ref{fig:svc_synthetic}). While designing content-aware augmentations for arbitrary graph datasets is a hard problem~\cite{Trivedi21_AugCL}, with oracle knowledge of the data generation process, we can evaluate content-aware augmentations (CAAs) with high recoverability at varying levels of separability, which we approximate through different style levels.

\textbf{Generation Process: } The proposed data generation process has three components: a set of $C$ motifs, $\mathcal{M}$, that uniquely determine $C$ classes; a random graph generator, $RBG(n)$, parameterized by the number of nodes (we can equivalently define this based on number of edges); and $\kappa$, the style multiplier, which controls how much irrelevant information a sample contains. To generate a sample, we attach a randomly generated background graph (\textit{i.e.}, style component) to a motif (\textit{i.e.}, content) according to the style multiplier. This simple process addresses several limitations often encountered in graph CL evaluation. Specifically, it (i) allows for varying levels of content-aware augmentation (\textit{i.e.}, edges that can be perturbed in the background graph without harming the motif); (ii) is easily extended beyond binary classification; (iii) contains relatively large number of samples; and (iv) offers a natural test bed for GNN size generalization or transfer learning~\cite{yehudai2021local}. \vspace{-5pt}

\subsubsection{Difficulties in Recovering Style Invariant Representations}\label{sec:style-ratio}
Several real graph datasets can be understood through a style vs.\ content perspective. For example, in drug discovery tasks \cite{Zitnik_Biology}, molecules can be split into functional groups (content) and carbon rings or scaffold structure (style).
One may thus ask: how does varying levels of style vs.\ content affect the performance of graph URL algorithms, and how do different algorithms benefit from the use of content-aware augmentations? To answer these questions, we conduct the following experiment: 

\vspace{0.13cm}
\textbf{Experimental Setup.} Let $C=6$, $\kappa=4$ and define $RBG(n)$ through a random tree generator, where $n$ is number of the nodes belonging the motif, scaled by $\kappa$. Node features are a constant 10-dimensional vector. To increase task difficulty, we randomly insert between 1-3 motif copies into each sample. Using the specified instaniation of the generation process, we train GraphCL, AAGAE, GAE, and SpecLoss with \textit{content-preserving} edge dropping and random edge dropping at 20\% and 60\% augmentation strength. We also evaluate two recently proposed automated augmentation methods, JOAO \cite{You21_GraphCLAutomated} and AD-GCL\cite{suresh21_AdvGCL}, as well as SimGRACE~\cite{Xia22_SimGRACE}, which uses implicit, weight space perturbations. JOAO is trained with a GGA prior and an expanded GGA prior that includes content-preserving edge dropping. AD-GCL is trained using a learnable edge-dropping augmentor. A 5-layer GIN encoder is used and models are trained for 60 epochs using Adam (with a learning rate of 0.01). After training, all models are evaluated using the linear probe protocol \cite{Chen20_SimCLR} at varying style ratios. Given that style information is not relevant to the downstream task, we expect models that have truly learned invariance to this information will retain strong performance across different ratios. See \autoref{app:data} for more model and training details.

\begin{figure} 
    \centering
    \includegraphics[width=\textwidth]{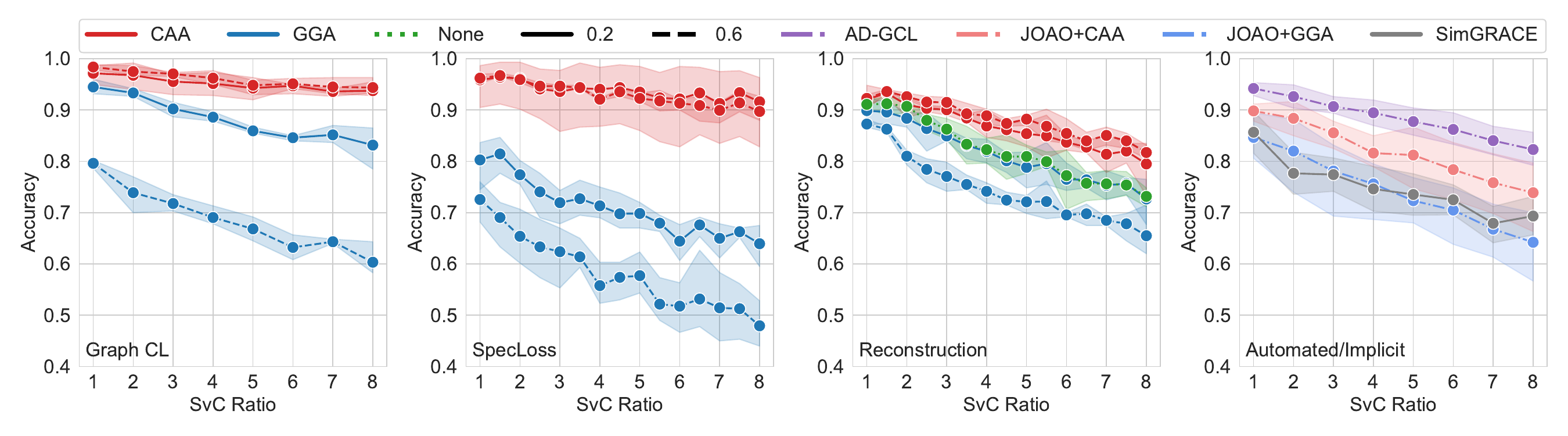}
    \vspace{-0.45cm}
    \caption{\textbf{Style Invariance over Paradigms:} We evaluate several SSL algorithms with different augmentation paradigms and changing style vs.\ content ratios. We find several notable results: (i) CAAs induce style invariance in contrastive methods, but GGAs do not; (ii) reconstruction methods do not recover task-relevant invariances, even when using CAAs; and (iii) advanced augmentations methods (AD-GCL, JOAO, SimGRACE) lose performance as style increases, indicating they do not induce style-invariance.}\vspace{-0.5cm}
    \label{fig:svc_synthetic}
\end{figure}

\textbf{Results.} We make the following observations using Fig. \ref{fig:svc_synthetic}, which clearly demonstrate the value of the proposed benchmark in studying the behavior of different SSL and augmentation paradigms.
(i) In accordance to Sec. \ref{sec:roleofpag}, we empirically see that both GraphCL and SpecLoss do not loss performance as the style ratio increases when using CAAs, indicating the model has learned task-relevant invariances. (ii) Auto-encoding reconstruction methods are an alternative SSL paradigm, but unfortunately also struggle to recover style-invariant solutions. Moreover, the use of the CAAs with such methods does not improve performance as effectively as in contrastive paradigms. (iii) For the first time, we are able to evaluate whether automated methods, which aim to recover strong augmentations without expensive hyper-parameter tuning or hand designing, are able to recover an optimal augmentation that generalizes across style ratios. Unfortunately, we see both AD-GCL \cite{suresh21_AdvGCL} and JOAO \cite{You21_GraphCLAutomated} lose performance as the style ratio increases, indicating such a solution has not been found. Indeed, JOAO is unable to find such a solution even when the augmentation prior includes the oracle CAAs. These results not only highlight the brittleness of such automated methods, but indicate our benchmark is a necessary testbed for such methods. (iv) To avoid corrupting graph semantics when using input-space augmentations, SimGRACE \cite{Xia22_SimGRACE} instead uses implicit, weight-space augmentations. However, we find, despite tuning the perturbation parameter, SimGRACE cannot recover strong, style-invariant performance. Overall, using our grounded synthetic benchmark, we are not only able to able to compare the performance of graph SSL algorithms when data-centric properties are supported (e.g., recoverable augmentations), but are also able to identify limitations of advanced augmentation methods that were not apparent using standard benchmarks.

\vspace{-5pt}
\setlength{\intextsep}{0pt}%
\setlength{\columnsep}{7pt}%
\begin{wrapfigure}{r}{0.47\textwidth}
\vspace{-30pt}
\centering
\includegraphics[width=0.45\textwidth]{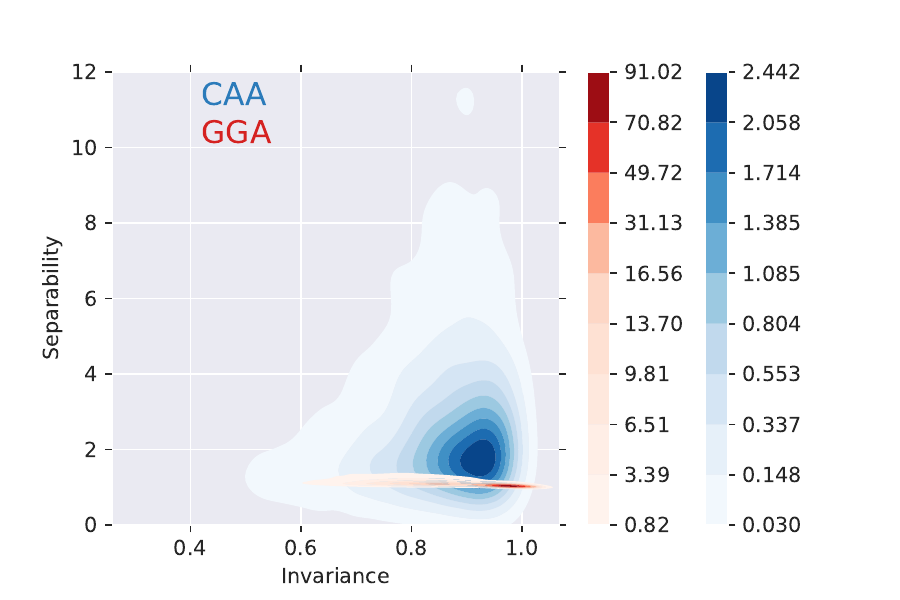}
\vspace{-0.2cm}
\caption{\textbf{Invariance vs. Separability}.  On our synthetic data with style-to-content ratio $\kappa=6$ and 20\% augmentation strength, GraphCL trained with random augmentations produces representations with high invariance but low separability. In contrast, using content preserving augmentations leads to almost as high invariance, but much greater separability.}
% \vspace{-0.2cm}
\label{fig:syn_inv_acc}
\end{wrapfigure}
% \vspace{3pt}\\
\subsubsection{Invariance vs. Separability}\label{subsec:inv_vs_sep}
We now use our synthetic benchmark to investigate how augmentation recoverability influences the balance of invariance and separability in the learned latent space. Considered in isolation, invariance can be trivially satisfied through representation collapse, i.e., all samples are mapped to highly similar representations. However, such representations are not separable as they cannot meaningfully distinguish classes. Therefore, in the following experiment, we jointly consider these properties to understand the benefits of CAAs.

\textbf{Experimental Setup.} Using a synthetic dataset at $\kappa=6$, we respectively train GraphCL with \textit{content-preserving} and random edge dropping at 20\% augmentation strength.  We compute an invariance score for each natural sample by computing the average cosine similarity of its representation with that 30 different augmentations. We compute a separability score by dividing the maximum cosine similarity to a sample of the same class by the maximum cosine similarity to a sample of another class.

\WFclear
\textbf{Results.} \autoref{fig:syn_inv_acc} shows kernel density estimates of the number of samples that have a given invariance and separability, when training with GGA or CAA. GGA induces representations with somewhat higher invariance but much lower separability scores, suggesting some representation collapse are occurred. Indeed, with a higher augmentation strength (60\%), we found that using GGA produced invariance and separability scores very close to 1 for all samples, indicating strong collapse. On the other hand, CAA helps GraphCL achieve over an order of magnitude higher separability and still preserves comparably high invariance. We observed similar trends for SpecLoss.  

\textbf{Invariance vs.\ Separability in Realistic Settings.} 
In App. \ref{app:mocl}, we replicate this experiment using BACE \cite{subramanian_bace}, a molecule-protein interaction dataset, and the biochemistry-based augmentations proposed by Sun et al. \cite{Sun_MoCL21} as CAAs. We find that our observations continue to hold in this real-world use-case, demonstrating the generality of our theory and practicality of our synthetic benchnmark. 
\vspace{-10pt}

\section{Conclusion}
In this work, we rigorously contextualize, theoretically and empirically, the role of data-dependent properties for graph CL. We propose a novel generalization analysis which, for the first time, formalizes the limitations of using GGAs in graph CL. 
As we note in Sec. \ref{sec:roleofpag}, our results can be extended to other contrastive frameworks by leveraging our insight on representing graph augmentations as composable graph-edit operations and extending the contemporary work of Saunshi et al.~\cite{Saunshi22_CLInductiveBias}. We suspect a similar extension can also be made for predictive methods like BYOL by using the analysis of Wei et al.~\cite{Wei21_SelfTraining} (see App. \ref{app:otherlosses} for further discussion). In line with our theory, we empirically demonstrate that GGAs fail to induce useful task-relevant invariances on standard benchmarks. We note our empirical results already demonstrate the generality of our results across different methods. Moreover, our insights motivate the design of a principled synthetic benchmark that provides a controlled setting for studying the role of data-dependent properties in graph SSL. Our benchmark also serves as a useful testbed for evaluating the abilities of automated or implicit augmentations techniques. Given the shortcomings we illustrate for such methods on synthetic datasets, we argue the development of domain specific strategies~\cite{Sun_MoCL21} may be a more fruitful direction for future work.

\section*{Acknowledgements}
This work was performed under the auspices of the U.S. Department of Energy by the Lawrence Livermore National Laboratory under Contract No. DE-AC52-07NA27344, Lawrence Livermore National Security, LLC.and was supported by the LLNL-LDRD Program under Project No. 21-ERD-012. PT was an intern at Lawrence Livermore National Labs while working on this project. ESL was partly supported via NSF award CNS-2008151.

% \section*{Contributions}\label{sec:contrib}
% PT: Led project formulation, writing, designing \& running experiments, and discussion. PT originally conceived of representing generic graph augmentations using composable, graph edit operations to derive a generalization bound based on SpecLoss and made early attempts at this derivation, as well as its interpretation.
% ESL: Contributed to project formulation, writing, experimental design, and discussion. ESL led theory section. ESL and PT jointly derived Defn. \ref{col:partion_dissimilarity} (partition dissimilarity) and Thm \ref{theorem:genbound_specloss} (generalization bound). ESL and PT jointly conceived of using the synthetic dataset and corresponding experiments. PT led the corresponding section.
% MH: Contributed to running experiments, discussion, writing, and figure generation.
% DK:assisted in early project ideation.
% JJT: senior advisor, contributed to project formulation, discussion, writing, and experimental design.

\newpage
\bibliographystyle{unsrt}
\bibliography{main}

% %%%%%%%%%%%%%%%%%%%%%%%%%%%%%%%%%%%%%%%%%%%%%%%%%%%%%%%%%%%%

\newpage
\appendix

\section{Contributions}\label{sec:contrib}
PT: Led project formulation, writing, designing \& running experiments, and discussion. PT originally conceived of representing generic graph augmentations using composable, graph edit operations to derive a generalization bound based on SpecLoss and made early attempts at this derivation, as well as its interpretation.
ESL: Contributed to project formulation, writing, experimental design, and discussion. ESL led theory section, deriving Defn. \ref{col:partion_dissimilarity} (partition dissimilarity) and Thm \ref{theorem:genbound_specloss} (generalization bound). ESL and PT refined the analysis together. ESL and PT jointly conceived of using the synthetic dataset and corresponding experiments. PT led the corresponding section.
MH: Contributed to running experiments, discussion, writing, and figure generation.
DK:assisted in early project ideation.
JJT: senior advisor, contributed to project formulation, discussion, writing, and experimental design.

\section{Reproducibility and Broader Impact}
For reproducibility, we have included code at \url{https://github.com/pujacomputes/datapropsgraphSSL.git}. Code is under-development and will be finalized soon. Our code uses the open source torch geometric \cite{Fey19_PyG} and PyGCL \cite{Zhu_PyGCL21} frameworks. 

Self-supervised representation learning is an increasingly popular paradigm for graph representation learning. Critical to many SSL frameworks is the choice of augmentation strategy. As we discuss in this paper, the properties or invariances induced by a particular augmentation strategy are often not well-understood. Failure to understand these properties can lead to unintended effects when the representations are used in downstream tasks. We hope that our work is useful in better understanding the role of augmentations and other data-centric properties on graph representation learning.

\section{Extended Discussion}

\subsection{Extending our Analysis to other Loss Functions}
\label{app:otherlosses}
While our analysis focuses on the spectral contrastive loss (SpecLoss) \cite{HaoChen_SpectralCL} for ease of exposition, it can also be extended to other contrastive loss functions and predictive methods, such as BYOL \cite{Grill20_BYOL}. As we noted in Sec. \ref{sec:preliminaries}, this can be easily accomplished by leveraging our insights on representing graph augmentations through composable graph-edit operations and extending the analyses of Saunshi et al. \cite{Saunshi22_CLInductiveBias} or Wei et al. \cite{Wei21_SelfTraining}.

Specifically, the contemporary work of Saunshi et al. proposes a general analysis of contrastive loss functionals and yields a generalization bound similar to Thm. \ref{theorem:genbound_specloss}, e.g., a bound that is dependent on similar data-centric properties and assumptions. In Sec. \ref{sec:roleofpag}, we decompose GGAs using GED, and then derive expressions for data-centric properties, such as partition dissimilarity, using this decomposition. Since the focus of our analysis is on understanding these data-centric properties in terms of intrinsic dataset attributes (e.g., GED between samples), our theory is complementary to the strategy used by Saunshi et al. Indeed, SpecLoss can be replaced with an alternative contrastive loss functional and adapting the analysis conducted in Sec. \ref{sec:roleofpag}, we can extend our results to other contrastive losses.
For predictive methods, we can leverage recent work by Wei et al. \cite{Wei21_SelfTraining} which provides an analysis for unsupervised learning methods for continuous data domains (such as images) by enforcing representation consistency on augmented samples--i.e., BYOL-like methods. 
Critically, Wei et al.'s generalization analysis relies on properties of the data-generating process's latent space and makes analogous assumptions to the unified recoverability plus separability assumption used in our own work. Thus, our theoretical analysis can be extended to BYOL-like methods by deriving equivalent analytical expressions for the latent-space properties used by Wei et al. Moreover, by representing GGAs using graph edit operations, our derivation of such properties relies upon minimal assumptions and is straight-forward. We do note, however, that Wei et al. assume that the dimension of learned representations is equivalent to the number of classes in the dataset. This can be an invalid assumption in unsupervised learning. In contrast, our analysis is more flexible since we only assume the latent dimension is greater than the number of classes.

\subsection{Evaluation on a Non-Synthetic Dataset}
\label{app:mocl}
Our analysis in Sec.~\ref{sec:roleofpag} motivates the need for content-aware augmentations (CAAs) by demonstrating that generic graph augmentations (GGAs) often lead to inconsistent samples, harming representation separability and yielding task \textit{irr}elevant invariances. 
In Sec.~\ref{sec:stylevcontent}, we empirically validated these claims in a controlled setting through our new synthetic benchmark and the corresponding oracle CAAs (see Fig. \ref{fig:syn_inv_acc}). To demonstrate the generality of our analysis in a practical setup, we repeat this experiment in a realistic setting where domain knowledge is available to design content-aware augmentations.

%  On our synthetic data with style-to-content ratio $\kappa=6$ and 20\% augmentation strength, GraphCL trained with random augmentations produces representations with high invariance but low separability, indicating model collapse. In contrast, using content preserving augmentations lead to almost as high invariance but much greater separability.

\textit{Experimental Setup.} We analyze BACE, a molecule-protein interaction dataset. We train our models by closely following the setup of Sun et al.~\cite{Sun_MoCL21}, who propose biochemistry-inspired augmentations for learning domain-informed representations. In our paper's terminology, these augmentations can be regarded as content-aware augmentations. To ensure fair comparison, we use only  ``local" CAA, which does not incorporate additional ``global" domain knowledge (see Sun et al.~\cite{Sun_MoCL21} for further details). We compare against the strongest GGA baseline reported by the authors, called ``mask edge features" augmentation.

\begin{wrapfigure}{r}{0.47\textwidth}
\centering
\includegraphics[width=0.45\textwidth]{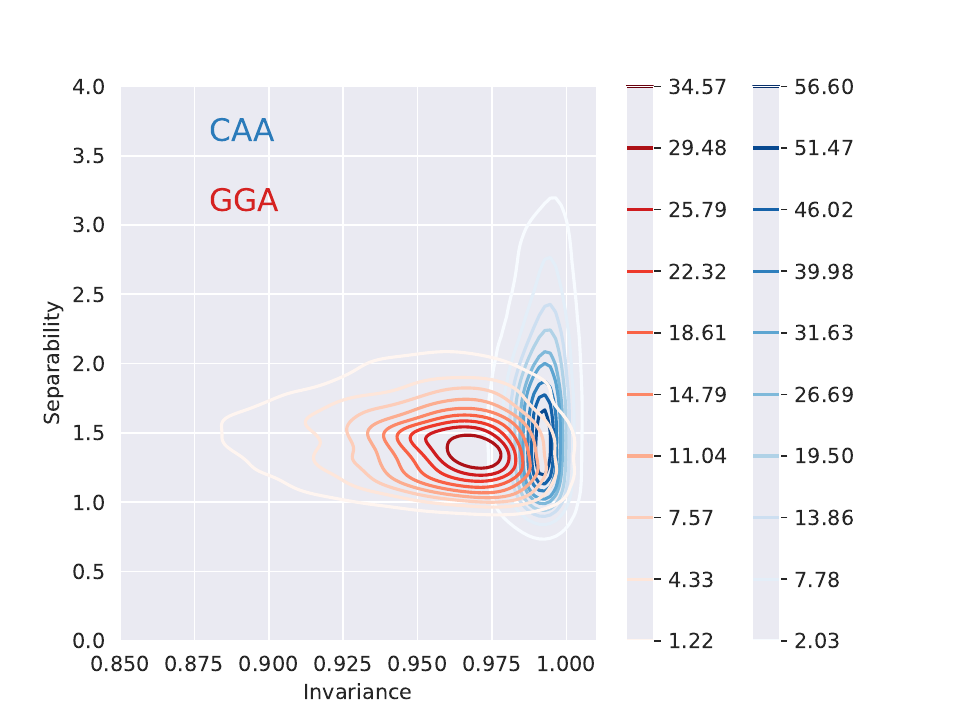}
\caption{\textbf{Invariance vs. Separability}.  On BACE \cite{subramanian_bace}, a molecule-protein interaction dataset, we compare the content-aware biochemistry-inspired augmentations from MoCL \cite{Sun_MoCL21} against the GGAs. In this real-world setting, we see that CAAs induce better invariance and separability (Contours are not filled to improve legibility).}
\label{fig:bace_mask_edge}
\end{wrapfigure}

For evaluation, we use the trained models to compute the invariance and separability for each sample. As in Sec.~\ref{subsec:inv_vs_sep}, an invariance score is obtained by computing the mean cosine similarity of a sample's representation with  30 of its augmentations. A separability score is computed by dividing the maximum cosine similarity of a given sample and same-class samples by the maximum cosine similarity of a given sample and different-class samples.

\textit{Results.} As demonstrated in Fig. \ref{fig:bace_mask_edge}, the biochemistry-inspired content-aware augmentations induce much better invariance and separability than the GGA. These results provide further corroboration to our synthetic dataset experiments in \ref{fig:syn_inv_acc}) and theory in Sec. \ref{sec:roleofpag}, where we argued that preserving content improves recoverability and leads to task-relevant invariances with better separability.

% \begin{figure*}[t!]
%         \begin{subfigure}[b]{0.45\textwidth}
%                 \centering
%              \includegraphics[width=.90\linewidth]{APPENDIX_v2/bace_mask_edge.pdf}
%                 \caption{Mask Edge}
%                 \label{fig:bace_mask_edge}
%         \end{subfigure}%
%         \begin{subfigure}[b]{0.45\textwidth}
%                 \centering
%              \includegraphics[width=.90\linewidth]{APPENDIX_v2/bace_drop_edge.pdf}
%                 \caption{Natural Language Space}
%                 \label{fig:bace_drop_edge}
%         \end{subfigure}%
%          \caption{\textit{BACE}}
%  \label{fig:bace}
% \end{figure*} 

\subsection{On Using Mutual Information for Analyzing Task-Relevance in Augmentations}
\label{app:infomin_disc}
While several different perspectives have been recently proposed for studying self-supervised learning's behavior, many of these frameworks assume that augmentations induce invariance to information that is \textit{irr}elevant to the downstream task, ignoring the potential for  augmentations to induce invariance to task-relevant information and harm generalization performance. However, as we discussed in Sec. \ref{sec:preliminaries}, a notable exception is the information-theoretic analysis of Tian et al. \cite{Tian20_WhatMakesForGoodViews}. Specifically, Tian et al. rely upon an information-theoretic framework that interprets the InfoNCE loss as a lower bound of mutual information between two samples. They demonstrate under this framework that optimal augmentations are ones that maximally perturb information irrelevant to the downstream task. However, this viewpoint suffers from the fallacy that InfoNCE is rarely empirically correlated with mutual information. Indeed, Poole et al.\cite{Poole_MIBounds} demonstrate that this interpretation is only valid when mutual information between two samples is \textit{very} large. For high-dimensional inputs, this will hold true when an augmentation does not alter the input at all, which does not align with the practical behavior of graph (or even image) augmentations. This renders the analysis by Tian et al. relatively inexact compared to our own analysis.

In contrast, we emphasize that our analysis, which has been designed from the ground-up for graph data and augmentations, is more exact. By representing graph augmentations as composable graph-edit distance (GED) operations, we are able to rigorously relate the generalization abilities of a contrastive trained model to intrinsic dataset properties. Specifically, by deriving definitions for partition dissimilarity (Defn 3.8) and inconsistent samples (Lemma 3.6) using GED, our generalization bound relies upon minimal additional assumptions (Thm \ref{theorem:genbound_specloss}). In Sec. \ref{subsec:inv_vs_sep} and Sec. \ref{app:mocl}, we verify that our theoretical observations are well supported by our experiments on both synthetic and real-world datasets, further demonstrating the validity of our chosen analysis framework.

\section{Generic Graph Augmentations and Graph Edit Distance}\label{app:pag}
The key insight for our analysis in Sec. \ref{sec:roleofpag} is that GGAs can be instantiated in a general manner as a composition of graph edit operations. This allows us to derive a unifying assumption related to recoverability and separability in terms of the graph edit distance (GED) between samples. Here, we provide proofs and additional discussion for the statements made in Sec. \ref{sec:roleofpag}. We also discuss how our analysis can be interpreted with respect to the population augmentation graph (PAG) proposed by HaoChen et al. \cite{HaoChen_SpectralCL}. 
\begin{table}[htbp]
\centering
\caption{\label{table:notation}Notation}
{
\begin{tabular}{cp{11cm}}
\toprule
\textbf{Symbol} & \textbf{Definition} \\
\midrule
$\natdata$& The original or natural dataset. \\
$\mathcal{X}$& Set of all augmented data. \\
$\ngraph \in \natdata$& Natural (attributed) graph sample. \\
$\agraph,\agraphp \in \mathcal{X}$& Augmented (attributed) graph samples \\
$\mathcal{E}_{\ngraph}$& Edge set of $\ngraph$. \\
$\mathcal{V}_{\ngraph}$& Node set of $\ngraph$. \\
$\gamma \in [0,1]$& Augmentation strength. Controls the $\%$ of edges or nodes that may be perturbed by the selected augmentation. \\
$\mathcal{A}(\ngraph)$& The set of augmented samples that can be generated from Augmentation, $\mathcal{A}$, given natural sample $\ngraph$ and $\gamma$. \\
$\mathcal{A}(\cdot |\ngraph)$& Distribution of augmentations given a natural sample, $\ngraph$.  \\
$\mathcal{A}(\agraph |\ngraph)$& Probability of generating $\agraph$ from $\ngraph$ given augmentation $\mathcal{A}$.  \\
$f$& Representation Encoder, $f:\{\natdata,\mathcal{X}\} \rightarrow \mathbb{R}^d$\\
$h$& Classifier, $h:\mathbb{R}^d \rightarrow y$\\
\bottomrule
\end{tabular}
}
\end{table}

\subsection{GGA and Graph Edit Distance}

Graph edit distance ($GED$) is used to capture similarity between two graphs. Intuitively, it captures the cost of making elementary edit operations on a graph, $g_1$, to transform it to be isomorphic to another graph, $g_2$. Formally, 

\begin{definition}[Graph Edit Distance (Defn. 3.1)]
Let the elementary graph operators (\textit{node insertion}, \textit{node deletion}, \textit{edge deletion}, \textit{edge addition}), and the \textit{categorical feature replacement} operator comprise the set of graph edits. Then, $
GED\left(g_{1}, g_{2}\right)=\min _{\left(e_{1}, \ldots, e_{k}\right) \in \mathcal{P}\left(g_{1}, g_{2}\right)} \sum_{i=1}^{k} c\left(e_{i}\right)$,
where $\mathcal{P}\left(g_{1}, g_{2}\right)$ is the set of paths (series of edit operations) that transforms $g_1$ to be isomorphic to $g_2$. Here, $e_i$ is $i$-th edit operation in the path, and $c(e_i)$ is the cost for performing the edit. 
\end{definition}

\begin{wraptable}{R}{7cm}
\centering
\caption{\textbf{Generic Graph Augmentations vs. Graph Edit Operators. (Reproduced. Table 1.)} GGA can be straightforwardly expressed using graph edit operators.}
\resizebox{7cm}{!}{
 \begin{tabular}{l|l}
 \toprule
      \textbf{Augmentations} & \textbf{Graph Edit Operators}  \\
 \midrule
 Node Dropping & Node Deletion \\
 Edge Perturbation & Edge Deletion, Edge Addition \\
 Categorical Attribute Masking & Categorical Feature Replacement Operator  \\
 Sub-graph Sampling & Node Deletions \\
 \bottomrule
\end{tabular}
}
\end{wraptable}

As shown in Table. \ref{tab:aug_vs_operator}, elementary graph edit operators can be used to straight-forwardly represent the \textit{node dropping, edge perturbation and sub-graph sampling} generic graph augmentations \cite{You20_GraphContrastiveLearning}.
%We note that while GED is typically defined over a graph's structure, 
By introducing an additional graph operator, \textit{categorical feature replacement}, we are also able to consider distance with respect to categorical node attributes. 
%Categorical attributes are common in molecular classification tasks and theoretical analyses of GNNs~\citep{Hu19_HowPowerfulAreGNN}. 
This operator performs a ``replacement'' whenever there is a disagreement between $g_1$  and $g_2$'s node attributes. Then, the GED is the total cost of structural changes and attribute disagreements between two graphs. Here, we assign a unit cost per operation so all operations are treated equally. Assigning cost to reflect different inductive biases over augmentations is an interesting direction left for future work. Next, we briefly discuss some examples of using graph edit operators to represent GGAs. 

Let $(\ngraph,\agraph)$ represent the original and augmented graph respectively, where we perform \textit{node dropping} to obtain $\agraph$. Recall that the \textit{node dropping} augmentation may only drop up to some fraction of nodes in $\ngraph$. Then, clearly the minimum cost path can then be found using only \textit{node deletion} operators, and the $GED(\ngraph,\agraph)$ is bounded by the number of allowed node drops. Similarly, if $\agraph$ was obtained through the \textit{edge perturbation} augmentation, which randomly adds or removes a fraction of edges, then $GED(\ngraph,\agraph)$ is bounded by the number of allowable edge modifications and can be obtained using only \textit{edge addition/deletion} operators. (Here, we allow nodes without edges to still exist, so performing node addition/deletion would not result in a lesser \textit{GED}.) The \textit{sub-graph sampling} augmentation extracts a connected sub-graph that contains at most a fraction of total nodes. The minimum cost path can then be defined using only \textit{node deletions}, e.g. where the operator is applied to all nodes not in the sampled sub-graph. Therefore, $GED(\ngraph,\agraph)$ is bounded by $|\ngraph| - |\agraph|$. As discussed above, the \textit{categorical attribute masking} augmentation can be recovered by directly applying the categorical feature replacement operator. Then, the minimum cost path is then the number of differences between the augmented and original samples' node attributes. We formalize the relationships between augmentations and GED in the following Lemmas.

\begin{lemma}\textbf{Allowable augmentations can be expressed using GED. (Reproduction of Lemma 3.2)}
Let $\ngraph$ be a natural sample in $\natdata$, $\mathcal{A}$ be some GGA, $\agraph \sim \augp{\cdot}{\ngraph}$ be an augmented sample generated from $\ngraph$ and $\gamma$ be the augmentation strength or the fraction of the graph that GGAs may modify.
Then,  $\delta \in \{\lfloor\gamma\vert \mathcal{V}_{\ngraph}\vert\rfloor,\lfloor\gamma\vert \mathcal{E}_{\ngraph}\vert \rfloor \}$ represents the number of discrete, allowable modifications for the specified GGA, so $GED(\ngraph,\agraph) \leq \delta.$ Correspondingly, we have $\agraph \in \mathcal{A}(\ngraph) \Leftrightarrow GED(\agraph, \ngraph) \leq \delta$.
\end{lemma}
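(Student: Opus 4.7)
The plan is to prove both parts of the lemma by a case analysis on the type of GGA, leveraging the correspondence in Table~\ref{tab:aug_vs_operator} that identifies each GGA with a specific family of elementary edit operators. For each GGA, I will first exhibit an explicit edit path of length at most $\delta$ that transforms $\ngraph$ into $\agraph$, which upper bounds $GED(\ngraph,\agraph)$; then I will prove the biconditional by showing that, within the operator family used by that GGA, $\delta$-bounded edit paths are in bijection with allowable augmentations sampled from $\mathcal{A}(\cdot|\ngraph)$.

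For the inequality $GED(\ngraph,\agraph) \leq \delta$, I would enumerate cases. If $\mathcal{A}$ is node dropping with strength $\gamma$, then $\agraph$ differs from $\ngraph$ by at most $\delta = \lfloor \gamma |\mathcal{V}_{\ngraph}|\rfloor$ node deletions, producing an edit path of unit-cost operators with total cost at most $\delta$. If $\mathcal{A}$ is edge perturbation, then $\agraph$ differs from $\ngraph$ by at most $\delta = \lfloor \gamma |\mathcal{E}_{\ngraph}|\rfloor$ edge additions/deletions, again yielding a $\delta$-cost path. Categorical attribute masking uses at most $\delta$ applications of the categorical feature replacement operator, and subgraph sampling drops the $\delta$ complementary nodes. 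In each case, the existence of such a witness path gives $GED(\ngraph,\agraph) \leq \delta$ directly from Definition~\ref{def:ged}, since the minimum over all edit paths can only be smaller than the cost of any particular one.

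For the biconditional, the forward implication $\agraph \in \mathcal{A}(\ngraph) \Rightarrow GED(\ngraph,\agraph) \leq \delta$ is immediate from the inequality above, applied to any sample actually produced by the GGA. For the reverse implication, I would show that any $\agraph$ satisfying $GED(\ngraph,\agraph) \leq \delta$ using the operator family associated with the specified GGA can be realized by running $\mathcal{A}(\cdot|\ngraph)$ with the corresponding sequence of random edit choices. Concretely, a minimum-cost path of length $k \le \delta$ consisting solely of the operators available to $\mathcal{A}$ is itself a valid trajectory of the augmentation's sampling procedure (possibly padded with identity choices to reach $\delta$ steps), so $\agraph \in \mathcal{A}(\ngraph)$.

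The main obstacle is that the base $GED$ of Definition~\ref{def:ged} ranges over the full family of elementary operators, while each GGA only exercises a specific subfamily. In principle a graph $\agraph$ with small GED to $\ngraph$ might be reached by operators outside the GGA's repertoire, e.g.\ a near-neighbor of $\ngraph$ under node dropping might require an edge addition in its shortest edit path and hence fail to lie in $\mathcal{A}(\ngraph)$. The cleanest resolution, which I would make explicit when writing the full proof, is to interpret the GED on the right-hand side of the biconditional as restricted to the operator subfamily of the specified GGA, or equivalently to observe that since each GGA modifies either nodes or edges (but not both in conflicting directions), the unrestricted minimum-cost path between $\ngraph$ and any $\agraph \in \mathcal{A}(\ngraph)$ coincides with the restricted one, so no external operator can shorten the path below the $\delta$ budget counted by the GGA.
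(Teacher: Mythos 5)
Your core argument for $GED(\ngraph,\agraph)\le\delta$ is the same as the paper's: exhibit a witness edit path built from the operator family of Table~\ref{tab:aug_vs_operator}, note each operator has unit cost and at most $\delta$ modifications are permitted, and conclude via the minimality in Definition~\ref{def:ged}. The paper's proof stops there --- it proves only the forward inequality and never argues the reverse implication of the biconditional, silently treating ``$GED(\agraph,\ngraph)\le\delta \Rightarrow \agraph\in\mathcal{A}(\ngraph)$'' as definitional (indeed it already writes ``the shortest path comprised of the edit operators defined in Table~1 for the given GGA,'' i.e.\ it implicitly works with the GED restricted to that GGA's operator subfamily). You go further on two counts, both to your credit: you explicitly prove the reverse direction by realizing a $\delta$-bounded restricted edit path as a trajectory of the sampler, and you correctly identify the genuine subtlety the paper elides --- that the unrestricted GED of Definition~\ref{def:ged} ranges over all elementary operators, so a graph can be within GED $\delta$ of $\ngraph$ without being reachable by the specified GGA (your node-dropping/edge-addition example is exactly right). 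Your proposed fix, reading the GED in the biconditional as restricted to the GGA's operator family, is the reading the paper tacitly adopts; your alternative claim that the restricted and unrestricted minima always coincide for $\agraph\in\mathcal{A}(\ngraph)$ is plausible for these augmentations but is not needed for the forward direction and does not rescue the reverse direction, so the restriction should be stated rather than argued away. In short: same approach as the paper where the paper has a proof, and a more careful treatment where it does not.
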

\begin{proof}
Let $\mathcal{P}$ be the shortest path comprised of the edit operators defined in Table. \ref{tab:aug_vs_operator} for the given GGA, $\mathcal{A}$. Then, given that at most $\delta$ discrete modifications are permitted and each operator has unit cost, $\text{len}(\mathcal{P}) \leq \delta$ and $\sum_{e_i \in \mathcal{P}}c(e_i) \leq \delta$. Thus, $GED(\ngraph,\agraph) \leq \delta.$
\end{proof}

% Given the aforementioned discussion, we can now define the set of allowable augmentations using $\textit{GED}$ and make the following remarks. Please see \autoref{table:notation} for a complete list of notation.

% \begin{definition}[Set of Allowable Augmentations]
% Let $\mathcal{A}$ be a generic graph augmentation (node dropping, etc). Then, all allowable augmented samples induced by $\mathcal{A}(x_i)$ have graph edit distance less than $\max\{\gamma|\mathcal{V}_i|,\gamma|\mathcal{E}_i|\}$ to $x_i$. Equivalently:
% $$\mathcal{B}(x_i)\triangleq \{x': GED(x',x_i) \leq \max\{\gamma|\mathcal{V}_i|,\gamma|\mathcal{E}_i|\}\}.$$
% \label{def:allowed_augs}
% \end{definition}

\begin{lemma}\textbf{Upper-bound on Size of Augmentation Set.}
The size of $A(\ngraph)$ can be upper-bounded through a combinatorial counting process. For example, to determine $A(\ngraph)$ when the considered augmentation is \textit{node dropping}, we can delineate all sets of possible nodes with size up-to $\gamma|\mathcal{V}_{\ngraph}|.$ Formally, the upper-bound on the number of samples generated using node dropping are: 
$$
|\mathcal{A}(\ngraph)| \leq \sum_{j=1}^{\gamma|\mathcal{V}_{\ngraph}|} 
\frac{|\mathcal{V}_{\ngraph}|!}{(|\mathcal{V}_{\ngraph}| - j)!j!} \nonumber
$$
We note that this value is an upper-bound because isomorphic pairs are treated as two separate graphs. Furthermore, note the size of the augmentation set grows exponentially with graph size. A similar counting process can be used to determine the number of possible augmented samples obtained through edge perturbation, sub-graph sampling or feature masking.  For example, the edge-dropping augmentation could be counted as: $|\mathcal{A}(\ngraph)| \leq \sum_{j=1}^{|\gamma\mathcal{E}_{\ngraph}|} 
\frac{|\mathcal{E}_{\ngraph}|!}{(|\mathcal{E}_{\ngraph}| - j)!j!} \nonumber
$. %The upper-limit of the summation will be changed to reflect the corresponding maximum size of the augmented graphs as will the typical element.
\label{remark:size_upperbound}
\end{lemma}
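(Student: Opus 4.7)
The plan is to exhibit, for each generic graph augmentation listed in Table~\ref{tab:aug_vs_operator}, a surjection from a purely combinatorial index set onto $\mathcal{A}(\ngraph)$ and then enumerate that index set. This will yield an upper bound on $|\mathcal{A}(\ngraph)|$ rather than an exact count, which is exactly what the statement claims. Throughout I will rely on Lemma~\ref{lemma:augstrength_vs_ged}, which certifies that augmentation strength caps the number of elementary edit operations applied to $\ngraph$, so the index set is always finite and controlled by $\gamma$.

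I would start with the node-dropping case, since it is the one explicitly written out in the lemma. Every $\agraph \in \mathcal{A}(\ngraph)$ obtained under node dropping is produced by selecting some $D \subseteq \mathcal{V}_{\ngraph}$ and applying the corresponding node-deletion operators; by Lemma~\ref{lemma:augstrength_vs_ged} we must have $|D| \leq \lfloor \gamma |\mathcal{V}_{\ngraph}| \rfloor$. Assigning to each $\agraph$ any admissible $D$ that produces it gives a surjection from the family of such subsets onto $\mathcal{A}(\ngraph)$, so
\begin{equation*}
|\mathcal{A}(\ngraph)| \;\leq\; \sum_{j=1}^{\lfloor \gamma |\mathcal{V}_{\ngraph}| \rfloor} \binom{|\mathcal{V}_{\ngraph}|}{j} \;=\; \sum_{j=1}^{\lfloor \gamma |\mathcal{V}_{\ngraph}| \rfloor} \frac{|\mathcal{V}_{\ngraph}|!}{(|\mathcal{V}_{\ngraph}|-j)!\,j!},
\end{equation*}
which matches the displayed expression in the lemma. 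The same template then transfers immediately to the remaining GGAs by reading off Table~\ref{tab:aug_vs_operator}: for edge perturbation, index by subsets of $\mathcal{E}_{\ngraph}$ of size at most $\lfloor \gamma |\mathcal{E}_{\ngraph}| \rfloor$ to toggle; for subgraph sampling, index by the retained connected node set; for categorical attribute masking, index by the subset of nodes whose features are replaced. In each case the analogous binomial sum is immediate.

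The one subtlety, and indeed the only reason the bound is an inequality rather than an equality, is that the surjection above need not be injective: two distinct edit-operation choices can yield isomorphic augmented graphs. For instance, dropping any of several automorphic leaves of $\ngraph$ produces the same element of $\mathcal{A}(\ngraph)$, and likewise toggling symmetric edges can collapse under the automorphism action. I would therefore explicitly flag that equality holds precisely when the relevant automorphism action on the chosen operation set is free, and otherwise the combinatorial count over-counts by the order of the stabilizer. Beyond this remark, the argument is routine enumeration, and the observation $\sum_{j=0}^{n}\binom{n}{j}=2^{n}$ makes the exponential growth comment appended to the lemma immediate.
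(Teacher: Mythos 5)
Your proposal is correct and follows essentially the same route as the paper: enumerate the (at most $\lfloor\gamma|\mathcal{V}_{\ngraph}|\rfloor$-sized) subsets of nodes/edges that an augmentation may act on, sum the binomial coefficients, and observe that the count is only an upper bound because distinct edit choices can produce isomorphic augmented graphs. Your added remark on when equality holds (free automorphism action) and the $2^n$ growth observation are harmless refinements of the paper's informal statement of the same counting argument.
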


We further note that because generic graph augmentations (GGAs) perturb the graph randomly, each augmented sample, $\agraph \in \mathcal{A}(\ngraph)$, is equally likely, e.g., $\augp{\agraph}{\ngraph} = \frac{1}{|\mathcal{A}|}$.
\section{Details for Generalization Analysis}
\subsection{Generalization Analysis}
%Despite the central role data augmentation plays in CL, several theoretical works \citep{Arora19_TheoreticalAnalysis, ToshK21_CLMVLM} that seek to understand the behavior of CL assume some form of conditional view-independence given a label.
%Recently, however, HaocChen et al. \cite{HaoChen_SpectralCL} introduced the notion of \textit{population augmentation graphs} (PAG), which allow for dependence between views, and derived a generalization bound for a novel CL objective based on spectral clustering. 
Recently, HaoChen et al. \cite{HaoChen_SpectralCL} demonstrated that spectral clustering over a graph that captures similarity of augmented data can recover class partitions as augmentations belonging to the same class are more similar, and thus well-connected. 
% We formally define the PAG and corresponding spectral contrastive loss (SpecLoss) now. 
% \noindent \textbf{Population Augmentation Graphs.} Given a natural dataset $\natdata$ with $r$ different classes, let $\aug{\nsample}$ be the distribution of augmentations for the sample $\nsample \in \natdata$. Then, $\augp{\asample}{\nsample}$ represents the probability of generating a particular augmentation $\asample$, and $\adata := \cup_{\nsample \in \pndata} \aug{\nsample}$ is the set of all augmented samples. Let $f: \adata \rightarrow \mathbb{R}^d$ be a feature extractor, where $f(x)$ can be used for downstream tasks.
% \begin{definition}[Population Augmentation Graph \cite{HaoChen_SpectralCL}] 
% Let $\popgraph$ be the PAG where the vertex set is all augmented data $\adata$. For any two augmented data $\asample, \asample' \in \adata$, define the edge weight $\wgedge{x}{x'}$ as the marginal probability
% of generating $x$ and $x'$ from a random natural data $\nsample\sim \pndata$:
% \begin{equation}
% \label{eq:PAG_ww}
% w_{xx^{\prime}} := \mathbb{E}_{\overline{x}  \in \mathcal{P}_{\overline{\mathcal{X}}}} [\mathcal{A}(x | \overline{x})  \mathcal{A}(x^{\prime} | \overline{x}) ].
% \end{equation}
% \end{definition}
%Intuitively, augmented samples generated from natural samples belonging to different classes are unlikely to be connected and hence well-connected subgraphs or partitions in the PAG are expected to align with class labels. 
These well-aligned partitions can be recovered through spectral decomposition of the similarity graph and the resulting embeddings can be used as features for downstream tasks. The SpecLoss objective, which performs this decomposition, is then defined as follows \cite{HaoChen_SpectralCL}: Let $\agraph \sim \augp{\cdot}{\ngraph}$, $\agraph^+ \sim \augp{\cdot}{\ngraph}$, given $\ngraph \in \ndata$ and $\agraph^- \sim \augp{\cdot}{\ngraph'}$, given $\nsample' \sim \pndata \land \ngraph' \neq \ngraph$. Then, for the positive/negative pairs $(\agraph,\agraph^+)/(\agraph,\agraph^-)$, the loss $\Loss{f}$ is:
\begin{equation}
    -2\cdot \Exp{\agraph, \agraph^+} \big[f(\agraph)^\top f(\agraph^+) \big]
	+ \Exp{\agraph, \agraph^-}\left[\left(f(\agraph)^\top f(\agraph^-) \right)^2\right] \nonumber
\end{equation}

By defining SpecLoss through spectral decomposition, its generalization error can be bounded using the recoverability and separability assumptions, which can also be understood in terms of the structure of the similarity graph. %We reintroduce these key properties and assumptions here. %Then, in \autoref{sec:empircalstudy} and \autoref{sec:stylevcontent}, we provide an empirical discussion of how these properties are related to graph datasets. In \autoref{sec:roleofpag}, we will derive an extended generalization bound for graph CL with GGA given our empirical insights from the preceding sections. 
%\pt{Include discussion of how SpecLoss relates to InfoNCE and BYOL - we dont need this!} 
% Critical to the analysis of SpecLoss is the assumption that data augmentation should not significantly corrupt task-relevant information or change the augmented sample's label. Formally:

%As discussed in Sec. \ref{sec:preliminaries}, recent analyses have found that SSL generalization error can be bounded under the assumptions of invariance to relevant augmentations, recoverability, and separability. 
Indeed, in Sec. \ref{sec:roleofpag}, we demonstrated how GGAs and GED influence recoverability and separability by deriving an analogous generalization bound for SpecLoss that is tailored for graph data. At a high-level, to find this bound, we derived expressions for recoverability, $\alpha$, and separability, $\rho$, based on graph edit distance, and then used these expression to recover the SpecLoss bound. We then performed some additional manipulation to derive the final expression presented in Thm. \ref{theorem:genbound_specloss}. Here, we provide the details and proofs behind these steps. We begin by restating the Separability plus Recoverability assumption. 

\begin{assumption}[\textbf{Separability plus Recoverability Assumption}, (Reproduction of Assm. 3.3)]
Let  $\ngraph \in \ndata$ and $y(\ngraph)$ be its label, and  
$\agraph \sim \augp{\cdot}{\ngraph}$. Assume that there exists a classifier $\clf$, such that $\clfx{\agraph} = y(\ngraph)$ with probability at least $1- \alpha$. We refer to $\alpha$ as the error of $\clf$.
\end{assumption}

Now, recall from Sec. \ref{sec:roleofpag}, that $\clf$ will incur irreducible error on inconsistent samples, which are defined as follows: 
\begin{corollary} (\textbf{Co-occuring augmentations.},Reproduction of Coll. 3.4)
Let $\ngraph \in \natdata$ and $\agraph, \agraphp \in \adata$. Then, $\agraph \sim \mathcal{A}(\ngraph) \land \agraphp \sim \mathcal{A}(\ngraph) \Leftrightarrow GED(\agraph,\agraph^{\prime}) \leq 2\delta$, where $\delta = \min\{\lfloor\gamma\vert \mathcal{V}_{\ngraph}\vert\rfloor,\lfloor\gamma\vert \mathcal{E}_{\ngraph}\vert \rfloor \,\lfloor\gamma\vert \mathcal{V}_{\agraph}\vert\rfloor,\lfloor\gamma\vert \mathcal{E}_{\agraph}\vert \rfloor \}.$
\end{corollary}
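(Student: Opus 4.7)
The plan is to prove the biconditional via two arguments, both rooted in the fact that graph edit distance is a metric on graphs up to isomorphism (it satisfies the triangle inequality because elementary edit costs are non-negative and any two edit paths can be concatenated into a single path).

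For the forward direction ($\Rightarrow$), I would apply Lemma~\ref{lemma:augstrength_vs_ged} twice: since $\agraph \in \mathcal{A}(\ngraph)$ and $\agraphp \in \mathcal{A}(\ngraph)$, we obtain $GED(\ngraph, \agraph) \leq \delta$ and $GED(\ngraph, \agraphp) \leq \delta$, where $\delta$ denotes the per-graph perturbation budget. Invoking the triangle inequality then gives
\begin{equation*}
GED(\agraph, \agraphp) \;\leq\; GED(\agraph, \ngraph) + GED(\ngraph, \agraphp) \;\leq\; 2\delta.
\end{equation*}
Because the budget defined by Lemma~\ref{lemma:augstrength_vs_ged} depends on the size of the originating graph, taking the minimum of the relevant budgets across $\ngraph$, $\agraph$, and $\agraphp$ (exactly as in the corollary's definition of $\delta$) yields a uniform bound that works regardless of which graph one measures the budget against.

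For the reverse direction ($\Leftarrow$), I would use a constructive midpoint argument. Assume $GED(\agraph, \agraphp) \leq 2\delta$, and let $\mathcal{P}^{*} = (e_{1}, \ldots, e_{k})$ be a minimum-cost edit path transforming $\agraph$ into $\agraphp$, with $\sum_{i} c(e_{i}) = k \leq 2\delta$. Applying the first $\lfloor k/2\rfloor$ operations to $\agraph$ yields an intermediate graph $\ngraph^{*}$ with $GED(\agraph, \ngraph^{*}) \leq \lfloor k/2\rfloor \leq \delta$ and $GED(\ngraph^{*}, \agraphp) \leq \lceil k/2\rceil \leq \delta$. By the converse half of Lemma~\ref{lemma:augstrength_vs_ged}, both $\agraph$ and $\agraphp$ lie in $\mathcal{A}(\ngraph^{*})$, so the two augmentations co-occur as augmentations of a common generator.

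The main obstacle is treating the perturbation budget carefully. Because edit operations can alter the vertex and edge counts, the bound $\lfloor \gamma \lvert \mathcal{V}\rvert \rfloor$ depends on which graph is chosen as the reference, and a naive splitting of the edit path could exceed $\delta$ if intermediate sizes shrink or grow. The corollary sidesteps this by defining $\delta$ as the \emph{minimum} over $\ngraph$, $\agraph$, and $\agraphp$, giving a uniform lower bound on the allowable budget along the whole path; verifying this suffices for the midpoint construction is the delicate step. A secondary subtlety is interpretational: in the reverse direction, $\ngraph^{*}$ need not coincide with the given $\ngraph \in \natdata$, so the statement is most naturally read as an existential claim about common ancestry, and this is the sense in which it is used to identify inconsistent samples in Definition~\ref{definition:inconsistent_sample}.
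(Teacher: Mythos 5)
Your forward direction is exactly the paper's proof: the paper's argument for this corollary consists solely of applying Lemma~\ref{lemma:augstrength_vs_ged} twice to get $GED(\agraph,\ngraph)\leq\delta$ and $GED(\agraphp,\ngraph)\leq\delta$ and then concluding $GED(\agraph,\agraphp)\leq 2\delta$ (the triangle inequality is used implicitly; you make it explicit, which is an improvement). Where you diverge is that you also attempt the reverse implication, which the paper's proof silently omits even though the statement is a biconditional and the $\Leftarrow$ direction is the one actually invoked later (in the proof of Lemma~\ref{lemma:edgecrossing_vs_inconsistent}, where $GED(\agraph,\agraphp)\leq 2\delta$ is taken to imply the existence of a common generating natural sample). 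Your midpoint construction is the natural way to try to close this, but it has a genuine gap that you only partially acknowledge: the intermediate graph $\ngraph^{*}$ obtained by splitting the optimal edit path is an arbitrary graph, with no guarantee that it belongs to $\natdata$. Co-occurrence in the sense the paper needs means both augmentations arise from a common \emph{natural} sample $\ngraph\in\natdata$, and nearness in edit distance alone cannot produce such a witness; the reverse direction is really an additional modeling assumption (that GED-closeness is a reliable proxy for common ancestry) rather than a theorem. There is also the budget mismatch you flag --- the per-graph budget $\lfloor\gamma|\mathcal{V}_{\ngraph^{*}}|\rfloor$ is not among the terms minimized in the corollary's definition of $\delta$ --- which your argument does not actually verify. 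In short: your $\Rightarrow$ reproduces the paper; your $\Leftarrow$ is an honest attempt at a direction the paper asserts but never proves, and neither you nor the paper establishes it rigorously.
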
  

\begin{proof}
Recall, that $\agraph \sim \mathcal{A}(\ngraph) \iff GED(\agraph,\ngraph) \leq \delta$ and $\agraphp \sim \mathcal{A}(\ngraph) \iff GED(\agraphp,\ngraph) \leq \delta$. Then, $GED(\agraph, \agraphp) \leq 2\delta$ and are co-occurring augmentations as they both belong to $\mathcal{A}(\ngraph).$
\end{proof}

\begin{definition}[\textbf{Inconsistent Samples}, Reproduction of Defn. 3.5]
Let $\agraph \in \adata$, and $y:\natdata \rightarrow r$ be a labeling function. Further, let $\natdata_{in} = \{\ngraph | \ngraph \in \natdata \land GED(\agraph, \ngraph) \le \delta\}$ be the set of natural samples that may have generated $\agraph$ and $Y_{in}^* = \{ y(\ngraph) | \ngraph \in \natdata_{in}\}$ be the set of unique labels. If $\agraph$ is an inconsistent sample, $|Y_{in}^*| > 1$. 
\end{definition}

Now, we fix the behavior of $\clf$ on inconsistent samples such that $\clf(\agraph) = y$, for some fixed $y \in Y_{in}^*$.
Then, $\clf$ induces an $r$-way partition over $\adata$, such that each sample, $\agraph$, belongs to a partition, $\mathbf{S}_h(\agraph)$. Further, because $\clf$ will always incur error on inconsistent samples, $\alpha$ can be lower bounded by the ratio of inconsistent to total samples. To this end, we use GED to identify inconsistent samples by identifying disagreement amongst partitions as follows.

\begin{lemma}[\textbf{Using GED to identify inconsistent samples}, Reproduction of Lemma 3.6]
Let $\agraph, \agraphp \in \adata$ and $GED(\agraph,\agraphp) \leq 2\delta$ such that $\agraph \in \mathbf{S}_i \land \agraphp \in \mathbf{S}_j$ and $i \neq j$, where partitions are induced by $\clf$. Then, at least one $\tilde{\agraph} \in \{\agraph,\agraphp\}$ must be an inconsistent sample.
\end{lemma}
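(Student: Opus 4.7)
The plan is to argue by contradiction: assume, toward contradiction, that neither $\agraph$ nor $\agraphp$ is inconsistent (i.e., both are \emph{consistent}), and then derive that the classifier $\clf$ must place them in the same partition, contradicting $i \neq j$. This reduces the lemma to reasoning about what it means for $h$ to be defined on consistent vs.\ inconsistent samples, together with the characterization of co-occurring augmentations from Corollary~\ref{lemma:connectivity_vs_GED}.

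Concretely, I would execute three steps. \textbf{Step 1 (extract a common generator).} Invoke the backward direction of Corollary~\ref{lemma:connectivity_vs_GED}: since $GED(\agraph,\agraphp) \le 2\delta$, there exists some natural sample $\ngraph \in \ndata$ with $GED(\ngraph,\agraph) \le \delta$ and $GED(\ngraph,\agraphp) \le \delta$, so $\ngraph \in \natdata_{in}(\agraph) \cap \natdata_{in}(\agraphp)$. \textbf{Step 2 (use consistency to pin down the label sets).} If $\agraph$ is consistent, Definition~\ref{definition:inconsistent_sample} gives $|Y_{in}^*(\agraph)| = 1$; since $\ngraph \in \natdata_{in}(\agraph)$, we conclude $Y_{in}^*(\agraph) = \{y(\ngraph)\}$. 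Identical reasoning for $\agraphp$ gives $Y_{in}^*(\agraphp) = \{y(\ngraph)\}$. \textbf{Step 3 (apply the fixed behavior of $\clf$).} Recall that on any sample $\tilde{\agraph}$, $\clf(\tilde{\agraph})$ is forced to lie in $Y_{in}^*(\tilde{\agraph})$ (on consistent samples it is the unique label, on inconsistent samples it is fixed to some element of that set). Applied to $\agraph$ and $\agraphp$, this forces $\clf(\agraph) = y(\ngraph) = \clf(\agraphp)$, hence $i = j$, contradicting the hypothesis. Therefore at least one of $\agraph,\agraphp$ must be inconsistent.

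The one delicate step is the backward direction of Corollary~\ref{lemma:connectivity_vs_GED} used in Step~1: the $\Leftrightarrow$ form of the corollary reads GED bound $\Rightarrow$ existence of a common generator, which is not generically true for arbitrary graph pairs but does follow in this setting because the augmentation neighborhoods $\mathcal{A}(\ngraph)$ are exactly GED-balls of radius $\delta$ (Lemma~\ref{lemma:augstrength_vs_ged}), so by the triangle inequality on GED the midpoint argument goes through. I would therefore make sure to cite Lemma~\ref{lemma:augstrength_vs_ged} and Corollary~\ref{lemma:connectivity_vs_GED} explicitly, as the rest of the proof is just chasing definitions. No other routine calculation is required.
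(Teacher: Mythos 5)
Your proof is correct and is essentially the contrapositive of the paper's argument: both extract a common natural generator $\ngraph$ from $GED(\agraph,\agraphp)\le 2\delta$ via Corollary 3.4, and both use the fixed behavior of $\clf$ (so that $\clf(\tilde{\agraph})\in Y_{in}^*(\tilde{\agraph})$ always) to conclude that $i\neq j$ forces $|Y_{in}^*|>1$ for at least one of the two samples; if anything, your version is slightly cleaner, since the paper's case split additionally assumes the common generator's label is $i$ or $j$. One small caveat: your triangle-inequality ``midpoint'' justification for the backward direction of Corollary 3.4 does not by itself produce a \emph{natural} sample within distance $\delta$ of both graphs, but the paper's own proof relies on that same unproved direction of the corollary, so you are on equal footing with the original.
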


\begin{proof} By definition, $GED(\agraph,\agraphp) \leq 2\delta$ implies that at least one of the following must be true: (i) $\ngraph_1 \in \ndata \ni y(\ngraph_1) = i \land GED(\ngraph_1, \agraph) \le \delta \land GED(\ngraph_1,\agraphp)\le \delta$ or (ii) $\ngraph_2 \in \ndata \ni y(\ngraph_2) = j \land GED(\ngraph_2, \agraph) \le \delta \land GED(\ngraph_2,\agraphp)\le \delta.$ WLOG, assume (i). Now, $\agraphp \in \mathbf{S}_j \Leftrightarrow \clf(\agraph) = j$, so $j \in |Y_{in}^*|.$ However, $GED(\ngraph_1, \agraph) \leq \delta$, so by Lemma \ref{lemma:augstrength_vs_ged} and Defn. \ref{definition:inconsistent_sample}, $y(\ngraph_1) = i \in Y_{in}^*$. Since, $i \neq j$, $|Y_{in}^*|> 1$, $\agraph$ must be an inconsistent sample. Note, if (ii) holds, then $\agraphp$ is an inconsistent sample. 
\end{proof}

Note that the above lemma does not rely on ground-truth label information to identify inconsistent samples, but only GED from natural samples. Given that the error on inconsistent samples is irreducible, as it is unclear which $y \in Y_{in}$ is correct, we can lower bound the error of $h$ as follows:
\begin{corollary}[\textbf{Error bound due to Inconsistent Samples}, Reproduction of Coll. 3.7]
The error of $h$ can be lower-bounded as $$\alpha \ge \frac{\sum_i^r \sum_{\agraph \in S_i,\agraphp \notin S_i} \mathds{1}(GED(\agraph,\agraphp)\leq 2\delta)}{|\adata|}.$$
\end{corollary}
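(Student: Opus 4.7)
The plan is to turn the ``irreducible error'' intuition into a counting argument using Lemma 3.6. Recall that $\alpha$ is the probability, over $\ngraph \sim \pndata$ and $\agraph \sim \augp{\cdot}{\ngraph}$, that $\clfx{\agraph} \neq y(\ngraph)$. Since $\clf$ induces an $r$-way partition $S_1, \dots, S_r$ of $\adata$, and every $\agraph$ is equally likely (recall $\augp{\agraph}{\ngraph} \approx 1/|\mathcal{A}(\ngraph)|$), $\alpha \cdot |\adata|$ can be interpreted as the total number of augmented samples on which $\clf$ is forced to err relative to \emph{some} generating natural sample. The goal is to lower-bound this count by the number of cross-partition pairs with small GED.

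First, I would fix any $\clf$ consistent with Assm. 3.3 and any inconsistent sample labeling scheme (i.e., each inconsistent $\agraph$ is assigned some fixed $y \in Y_{in}^*$). Then I would apply Lemma 3.6: for every ordered pair $(\agraph, \agraphp)$ with $\agraph \in S_i$, $\agraphp \notin S_i$, and $GED(\agraph, \agraphp) \leq 2\delta$, at least one of $\{\agraph, \agraphp\}$ is an inconsistent sample, and moreover by Coll.~3.4 the two share a common generating natural sample $\ngraph$. Since $\clf(\agraph) \neq \clf(\agraphp)$, at most one of the two labels can equal $y(\ngraph)$, so at least one endpoint of every cross-partition pair contributes a unit of irreducible error to $\alpha \cdot |\adata|$. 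Summing the indicator $\mathds{1}(GED(\agraph,\agraphp)\leq 2\delta)$ over all such ordered pairs and all $i$ gives the numerator of the claimed bound, while dividing by $|\adata|$ completes the conversion from count to probability.

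The main technical obstacle is bookkeeping: the double sum over ordered pairs counts each unordered cross-partition pair twice, and distinct cross-partition pairs may share a common inconsistent endpoint, so a naive tally risks either under- or over-counting the number of misclassified samples. I would handle this by attributing each misclassification to the \emph{pair} rather than the sample, using the fact that inconsistent samples participate in multiple irreducible errors when viewed through the distribution over generating $(\ngraph, \agraph)$ pairs; more concretely, one shows that each cross-partition pair in the numerator corresponds to at least one joint event $(\ngraph, \agraph)$ with $\clfx{\agraph} \neq y(\ngraph)$, and these joint events can be placed in injective correspondence with the summed indicators, yielding the bound without any loss factor. A final subtlety is that $\clf$ could in principle be chosen adversarially to minimize total error on inconsistent samples, but because every cross-partition pair independently certifies at least one irreducible error, the lower bound holds uniformly over all admissible classifiers $\clf$ satisfying Assm. 3.3.
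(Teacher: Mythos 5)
Your high-level route is the same one the paper takes: invoke Lemma 3.6 to certify that every cross-partition pair with $GED(\agraph,\agraphp)\leq 2\delta$ contains an inconsistent sample, argue that the label assigned to an inconsistent sample must disagree with at least one of its generating natural samples, and convert the resulting count into a lower bound on $\alpha$. The paper itself offers no more than this: it states the corollary and remarks that the numerator ``approximates'' the number of inconsistent samples and that the error on such samples is irreducible. So in spirit you have reproduced the intended argument.

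However, the place where you claim to close the argument is exactly where it is genuinely incomplete, and your resolution is asserted rather than proved. The numerator counts (ordered) cross-partition pairs, not inconsistent samples, and many pairs can share a single inconsistent endpoint: if one inconsistent $\agraph \in S_i$ satisfies $GED(\agraph,\agraphp)\leq 2\delta$ for $k$ different $\agraphp \notin S_i$, Lemma 3.6 applied to each of those $k$ pairs may certify the \emph{same} sample $\agraph$ every time, so the $k$ indicator terms need not correspond to $k$ distinct error events. Your proposed fix --- an injective correspondence between the summed indicators and joint events $(\ngraph,\agraph)$ with $\clfx{\agraph}\neq y(\ngraph)$ --- is never constructed, and it is unclear one exists: the number of error events attributable to a single inconsistent $\agraph$ is governed by how many natural samples in $\natdata_{in}$ carry a label other than the one $\clf$ assigns, which bears no relation to the number of cross-partition pairs in which $\agraph$ participates. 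Indeed, in configurations where one inconsistent sample neighbors many cross-partition samples, the right-hand side of the claimed inequality can exceed $1$, so the bound cannot hold as a literal statement and must be read (as the paper implicitly does) as an approximation in which the pair count stands in for the count of distinct inconsistent samples. A defensible rigorous version would either lower-bound $\alpha$ by the fraction of \emph{distinct} inconsistent samples identified via Lemma 3.6, or retain the pair count but normalize by the total number of co-occurring pairs rather than by $|\adata|$; your writeup should either do one of these or explicitly flag the approximation rather than claim an exact injection.
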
Here, the number of inconsistent samples can be approximated via $\sum_i^r \sum_{\agraph \in S_i, \agraphp \notin S_i} \mathds{1}(GED(\agraph,\agraphp)\leq 2\delta)$ and $|\adata|$ can be estimated using a combinatorial counting procedure. Thus, the above corollary reflects the fact that error on inconsistent samples cannot be reduced due to label un-identifiability.

Partition dissimilarity, which induces a notion of clustering of similar data-points in our analysis, can be defined as the following: 

\begin{definition}[\textbf{Partition Dissimilarity}, Reproduction of Defn. 3.8]
Let $S_1,\dots, S_r$ be an $r$-way partition of $\adata$. Then, we define the partition dissimilarity for a given partition as $$\phi_\adata({S_i}) = \frac{\sum_{\agraph\in S,\agraphp \notin S} \mathds{1}(GED(\agraph,\agraphp)\leq 2\delta)}{\sum_{\agraph\in S}\vert\{\agraphp | GED(\agraph,\agraphp) \leq 2\delta\}\vert}.$$
\end{definition}

We can now state the main result that re-derives the generalization error of SpecLoss in terms of GGAs, using the definitions of co-occurring pairs (Def.~\ref{lemma:connectivity_vs_GED}) and dissimilar partitions (Def.~\ref{col:partion_dissimilarity}). 
Notably, we decompose bound in terms of the number of co-occurring augmentation-pairs \underline{within} the same partition and the number of pairs that cross partitions, which are defined respectively as, $\lambda = \withinpartoptset$, and $\mu = \crosspartoptset$. 

\begin{theorem}[\textbf{Generalization Bound for SpecLoss with GGA}, Reproduction of Thm 3.9]
Assume the representation dimension $k \geq 2r$ and Assm. \ref{cor:recov_incon} holds for $\alpha \ge 0$. Let $F$ be a hypothesis class containing a minimizer $\globalminf$ of SpecLoss, $\Loss{f}$, which produces a $\lfloor k/2\rfloor$-way partition of $\mathcal{X}$ denoted by $\{S_{*}\}$. Let its most dissimilar partition have dissimilarity denoted by $\rho_{\lfloor k/2\rfloor} = \min_{i} \phi(S_{i} \in \{S_{*}\})$. Then, $\globalminf$ has a generalization error bounded as, where the middle term is from the original SpecLoss bound:
	\begin{align*}
	\mathcal{E}(\globalminf)
	\le \widetilde{O}\left(\alpha/\rho^2_{\lfloor k/2\rfloor}\right) =  \widetilde{O}\left(\frac{r}{|\adata|}\left[\mu + 2\lambda + \frac{\lambda^2}{\mu}\right]\right),
	\end{align*}
\end{theorem}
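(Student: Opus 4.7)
The plan is to reduce the statement to the existing SpecLoss generalization bound of HaoChen et al., then substitute in the GED-based expressions derived in Coll.~\ref{cor:recov_incon} and Defn.~\ref{col:partion_dissimilarity}, and finally simplify algebraically. Concretely, I would first invoke HaoChen et al.'s Thm.~3.7 (their spectral contrastive generalization theorem) as a black box. That theorem, under Assm.~\ref{dfn:recoverability} with error $\alpha$ and representation dimension $k \ge 2r$, gives any minimizer of SpecLoss the bound $\mathcal{E}(\globalminf) \le \widetilde{O}(\alpha/\rho^2_{\lfloor k/2 \rfloor})$, where $\rho$ is the minimum conductance of the $\lfloor k/2\rfloor$-way partition of their population augmentation graph. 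To apply this black box, I would verify that our partition dissimilarity in Defn.~\ref{col:partion_dissimilarity} coincides with their conductance when the edge weights of the population augmentation graph are instantiated through Coll.~\ref{lemma:connectivity_vs_GED}: each pair $(\agraph,\agraphp)$ has positive edge weight precisely when $GED(\agraph,\agraphp)\le 2\delta$, since $\augp{\agraph}{\ngraph}\augp{\agraphp}{\ngraph} > 0$ for some $\ngraph$ iff they co-occur as augmentations. This gives the first inequality $\mathcal{E}(\globalminf) \le \widetilde{O}(\alpha/\rho^2_{\lfloor k/2 \rfloor})$ essentially for free.

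Next, I would substitute our GED-based expressions to obtain the second form of the bound. From Coll.~\ref{cor:recov_incon}, the error $\alpha$ is lower-bounded by a sum of cross-partition co-occurring pairs over all $r$ classes divided by $|\adata|$; using the worst-case partition together with the fact that each cross-partition pair contributes to exactly two class sums, this yields an expression of order $r\mu/|\adata|$, where $\mu = \crosspartoptset$. From Defn.~\ref{col:partion_dissimilarity}, the denominator of $\phi(S_*)$ counts, for each $\agraph \in S_*$, the number of $\agraphp \in \adata$ within $GED \le 2\delta$, which splits into within-partition and cross-partition contributions, giving $\rho_{\lfloor k/2\rfloor} = \mu/(\lambda + \mu)$ for the most dissimilar partition, with $\lambda = \withinpartoptset$. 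Substituting into $\alpha/\rho^2$ yields
\begin{equation*}
\frac{\alpha}{\rho^2_{\lfloor k/2\rfloor}} \;\le\; \widetilde{O}\!\left(\frac{r\mu/|\adata|}{\bigl(\mu/(\lambda+\mu)\bigr)^2}\right) \;=\; \widetilde{O}\!\left(\frac{r(\lambda+\mu)^2}{\mu |\adata|}\right).
\end{equation*}
Expanding $(\lambda+\mu)^2 = \lambda^2 + 2\lambda\mu + \mu^2$ and dividing by $\mu$ gives the claimed $\mu + 2\lambda + \lambda^2/\mu$ form.

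The main obstacle I expect is bookkeeping rather than any deep argument: precisely matching the edge weights of HaoChen et al.'s population augmentation graph to the GED-based co-occurrence relation (so that their conductance equals our $\phi$), and correctly accounting for ordered-versus-unordered pair counting in the sums defining $\lambda$, $\mu$, and the numerator of $\alpha$ — in particular, justifying the factor $r$ that appears in the final bound by summing cross-partition pairs over all classes rather than a single one. A secondary subtlety is ensuring that the $\lfloor k/2\rfloor$-way partition emerging from HaoChen et al.'s spectral analysis is indeed the one with respect to which $\rho_{\lfloor k/2\rfloor}$ is minimized, which is already embedded in their proof via the Cheeger-type inequality they employ; since we reuse that theorem verbatim, we inherit this property without additional work.
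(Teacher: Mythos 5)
Your proposal follows essentially the same route as the paper's proof: invoke HaoChen et al.'s SpecLoss bound as a black box, identify the GED-based partition dissimilarity with their conductance via the co-occurrence relation, substitute $\alpha \sim r\mu/|\adata|$ and $\rho = \mu/(\lambda+\mu)$ (the paper writes the denominator as $\sum_{x\in S_*} w_x = \mu + \lambda$), and expand $(\lambda+\mu)^2/\mu$ to obtain $\mu + 2\lambda + \lambda^2/\mu$. The only cosmetic difference is your justification of the factor $r$ (worst-case partition plus double counting of cross pairs) versus the paper's assumption of I.I.D.\ data with roughly equal partition sizes; both land in the same place.
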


\begin{proof}
The conversion from recoverability ($\alpha$) and conductance ($\rho$) and within partition ($\mu$) and across partition pairs ($\lambda$), can be derived as follows. We assume that the data distribution is I.I.D and the size of the class partitions are roughly equivalent.
\begin{align*}
	\mathcal{E}(\globalminf)
	\le \widetilde{O}\left(\alpha/\rho^2_{\lfloor k/2\rfloor}\right) &= \widetilde{O}\left(\frac{ \sum_i^r\crosspartset}{|\adata|} \frac{1}{\left[\frac{\crosspartoptset}{\sumopt}\right]^2}\right)
\end{align*}

\begin{equation}
\footnotesize
\begin{split}
	&\mathcal{E}(\globalminf)
	\le \widetilde{O}\left(\alpha/\rho^2_{\lfloor k/2\rfloor}\right) = \widetilde{O}\left(\frac{ \sum_i^r\crosspartset}{|\adata|} \frac{\left[\sumopt\right]^2}{\left[\crosspartoptset\right]^2}\right) \\
	 &= \widetilde{O}\left(\frac{r\crosspartoptset}{|\adata|} \frac{\left[\sumopt\right]^2}{\left[\crosspartoptset\right]^2}\right) \\
	 &= \widetilde{O}\left( \frac{r\left[\sumopt\right]^2}{|\adata|\left[\crosspartoptset\right]}\right) \\
 	 &= \widetilde{O}\left( \frac{r\left[\crosspartoptset + \withinpartoptset \right]^2}{|\adata|\left[\crosspartoptset\right]}\right) \\
 	  &=\widetilde{O}\Biggl(\frac{r}{|\adata|}\Biggl[\frac{\left[\crosspartoptset\right]^2}{\left[\crosspartoptset\right]} \\
 	  &+ \frac{2\left[\crosspartoptset\withinpartoptset\right]}{\left[\crosspartoptset\right]} + \frac{\withinpartoptset}{\crosspartoptset}\Biggr]\Biggr) \\
 	    &=\widetilde{O}\Biggl(\frac{r}{|\adata|}\Biggl[\crosspartoptset\\ &+2\withinpartoptset + \frac{\left[\withinpartoptset\right]^2}{\crosspartoptset}\Biggr]\Biggr)
\end{split}
\end{equation}
Now, notice that the above equation can be understood as the number of inconsistent samples vs. the original samples. Let, $\lambda = \withinpartoptset$ and $\mu = \crosspartoptset$. Then, we have recovered the bound presented in Theorem \ref{theorem:genbound_specloss}.
\begin{equation}
\footnotesize
\begin{split}
 \widetilde{O}\left(\alpha/\rho^2_{\lfloor k/2\rfloor}\right) &=\widetilde{O}\Biggl(\frac{r}{|\adata|}\Biggl[\crosspartoptset\\ 
 &+2\withinpartoptset + \frac{\left[\withinpartoptset\right]^2}{\crosspartoptset}\Biggr]\Biggr)\\ 
%  \widetilde{O}\left(\frac{r}{|\adata|}\left[\crosspartoptset +2\withinpartoptset \\  &+\frac{\left[\withinpartoptset\right]^2}{\crosspartoptset}\right]\right) \\
%  &\approx \widetilde{O}\left(\frac{r}{|\adata|}\left[\mu + 2\lambda + \frac{\lambda^2}{\mu}\right]\right) \\
  &\approx \widetilde{O}\left(\frac{r}{|\adata|}\left[\underbrace{\mu}_{\text{inconsistent samples}} + \underbrace{2\lambda}_{\text{valid samples}} + \frac{\overbrace{\lambda^2}^{\text{valid samples}}}{\underbrace{\mu}_{\text{inconsistent samples}}}\right]\right). \\
%   &\approx \widetilde{O}\left(\frac{r}{|\adata|}\left[\crosspartopt +2\withinpartopt + \frac{\left[\withinpartopt\right]^2}{\crosspartopt}\right]\right) 
%  =\widetilde{O}\left(\alpha/\rho^2_{\lfloor k/2\rfloor}\right) \approx \widetilde{O}\left(\frac{r}{|\adata|}\left[\left[\crosspartopt\right]^2 +2\withinpartopt + \frac{\left[\withinpartopt\right]^2}{\crosspartopt}\right]\right)
 \end{split} 
\end{equation}

Recall, that inconsistent samples can be determined through graph edit distance (Defn. \ref{definition:inconsistent_sample}) between augmented samples. Moreover, that the maximum allowable edit distance between augmented samples is determined by augmentation strength.  
\end{proof}

\subsection{Connections to the Population Augmentation Graph}
The original bound for SpecLoss uses the population augmentation graph (PAG). While we did not use the PAG in our analysis for ease of exposition, we note that our analysis can be adapted for the PAG as follows:

\begin{definition}[Population Augmentation Graph \cite{HaoChen_SpectralCL}] 
Let $\popgraph$ be the PAG where the vertex set is all augmented data $\adata$. For any two augmented data $\agraph, \agraphp \in \adata$, define the edge weight $\wgedge{\agraph}{\agraphp}$ as the marginal probability
of generating $\agraph$ and $\agraphp$ from a random natural data $\ngraph\sim \pndata$:
\begin{equation}
\label{eq:PAG_ww}
w_{\agraph\agraphp} := \mathbb{E}_{\ngraph  \in \mathcal{P}_{\overline{\mathcal{X}}}} [\mathcal{A}(\agraph | \ngraph)  \mathcal{A}(\agraphp | \ngraph) ].
\end{equation}
\end{definition}

To extend our analysis to the PAG, we show that connectivity in the PAG is also determined by GED. Then, the definition of inconsistent samples, and partition dissimilarity (conductance) straight-forwardly follow.
\begin{lemma}\textbf{Connectivity in the PAG is determined by GED.}
%\label{lemma:connectivity_vs_GED}
Let $\agraph, \agraphp \in \adata$, and $\ngraph \in \natdata$. Then,   $\wgedge{\agraph}{\agraph^{\prime}} > 0 \Leftrightarrow GED(\agraph,\agraph^{\prime}) \leq 2\delta.$
\end{lemma}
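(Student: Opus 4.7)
The plan is to unfold the PAG edge-weight definition and then invoke Lemma~\ref{lemma:augstrength_vs_ged} together with Corollary~\ref{lemma:connectivity_vs_GED}. Since $\mathcal{A}(\agraph|\ngraph)\mathcal{A}(\agraphp|\ngraph)$ is a non-negative quantity for every $\ngraph$, the expectation defining $w_{\agraph\agraphp}$ is strictly positive if and only if there exists some $\ngraph \in \natdata$ with $\pndata(\ngraph) > 0$ for which \emph{both} $\mathcal{A}(\agraph|\ngraph) > 0$ and $\mathcal{A}(\agraphp|\ngraph) > 0$. By Lemma~\ref{lemma:augstrength_vs_ged}, each of these two factors is positive exactly when $GED(\agraph,\ngraph) \le \delta$ and $GED(\agraphp,\ngraph) \le \delta$, respectively. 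Thus the lemma reduces to showing that such an intermediate $\ngraph$ exists in the support of $\pndata$ if and only if $GED(\agraph,\agraphp) \le 2\delta$.

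For the forward direction, suppose $w_{\agraph\agraphp} > 0$. Pick a witnessing $\ngraph$ as above; then $\agraph,\agraphp \in \augp{\cdot}{\ngraph}$, and Corollary~\ref{lemma:connectivity_vs_GED} immediately gives $GED(\agraph,\agraphp) \le 2\delta$. Equivalently, this also follows by applying the triangle inequality of $GED$ (which is a metric on attributed graphs up to isomorphism, when unit costs are assigned as in Def.~\ref{def:ged}) along the path $\agraph \to \ngraph \to \agraphp$. For the backward direction, suppose $GED(\agraph,\agraphp) \le 2\delta$. By Corollary~\ref{lemma:connectivity_vs_GED}, there exists some $\ngraph \in \natdata$ such that $\agraph,\agraphp \in \augp{\cdot}{\ngraph}$; combined with $\pndata(\ngraph) > 0$, this gives $\mathcal{A}(\agraph|\ngraph)\mathcal{A}(\agraphp|\ngraph) > 0$, so the expectation defining $w_{\agraph\agraphp}$ is strictly positive.

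The main obstacle is in the backward direction, specifically in guaranteeing that a witnessing $\ngraph$ lies in the support of $\pndata$. Invoking Corollary~\ref{lemma:connectivity_vs_GED} hides this subtlety, since the corollary tacitly asserts the existence of such a natural generator. A fully rigorous argument must either (i) adopt the interpretation, implicit in the paper's construction $\adata := \cup_{\nsample \in \pndata}\aug{\ngraph}$, that any $\agraph \in \adata$ is already an augmentation of some $\ngraph \in \mathrm{supp}(\pndata)$ and that an intermediate natural generator exists between any two elements of $\adata$ within GED $2\delta$, or (ii) construct $\ngraph$ explicitly by taking a midpoint along the minimum-cost edit path between $\agraph$ and $\agraphp$ and then verify it coincides with a sample in $\mathrm{supp}(\pndata)$. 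Option (i) is consistent with the paper's setup and delivers the cleanest argument, so that is the route I would take.
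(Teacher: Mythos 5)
Your proposal follows essentially the same route as the paper's own proof: unfold the PAG weight definition to reduce positivity of $w_{\agraph\agraphp}$ to the existence of a common natural generator $\ngraph$, convert each factor to a GED condition via Lemma~\ref{lemma:augstrength_vs_ged}, and pass between that and $GED(\agraph,\agraphp)\le 2\delta$ via Corollary~\ref{lemma:connectivity_vs_GED}. The subtlety you flag in the backward direction (guaranteeing a witnessing $\ngraph$ in the support of $\pndata$) is real but is equally glossed over in the paper's proof, which likewise leans on the biconditional in Corollary~\ref{lemma:connectivity_vs_GED}; your option (i) matches the paper's implicit reading.
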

\begin{proof}
By Lemma \ref{lemma:connectivity_vs_GED}, $w_{\agraph\agraph^{\prime}} > 0 \Leftrightarrow \augp{\agraph}{\ngraph} > 0 \land \augp{\agraph^{\prime}}{\ngraph} > 0$. Moreover, if  $\augp{\agraph}{\ngraph} > 0$ then, $\agraph$ is the augmentation set of $\ngraph$. If $\agraph \in \mathcal{A}(\ngraph)$ then, $GED(\agraph,\ngraph) \leq \delta$. Then,  $\wgedge{\agraph}{\agraph^{\prime}} > 0 \Leftrightarrow GED(\agraph,\ngraph) \leq \delta \land  GED(\agraphp,\ngraph) \leq \delta$, which in turn applies, $\wgedge{\agraph}{\agraph^{\prime}} > 0 \Leftrightarrow GED(\agraph,\agraph^{\prime}) \leq 2\delta$.
\end{proof}

\begin{corollary}[\textbf{Conductance according to GGA}]
 Recall, the conductance $\phi_G$ of a partition $S_i$ in a graph $G$ measures how many edges cross partitions relative to total number of edges a node possesses and that $\augp{\agraph}{\ngraph} \approx \frac{1}{|\mathcal{A}(\ngraph)|}$. Then, $$\phi_G({S_i}) = \frac{\sum_{x\in S, x' \notin S} \mathds{1}(\wgedge{x}{x'} > 0 )}{\sum_{x\in S}w_x},$$ where ${w_x}$ represents the size of $x$'s edge-set. 
\end{corollary}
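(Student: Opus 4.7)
The goal is to derive the stated simplified form of $\phi_G(S_i)$ for the population augmentation graph (PAG) under the GGA assumption. My plan is to start from the textbook definition of conductance for a weighted graph, substitute in the PAG edge weights, and then use the uniformity of GGAs (i.e., $\augp{\agraph}{\ngraph} \approx 1/|\mathcal{A}(\ngraph)|$) to show that all nonzero edge weights carry a common multiplicative factor, which then cancels between numerator and denominator to yield the indicator-based expression.

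\textbf{Step 1: Write the weighted conductance explicitly.} Conductance of a partition $S \subset \adata$ in a weighted graph is
\begin{equation*}
\phi_G(S) \;=\; \frac{\sum_{\agraph\in S,\, \agraphp \notin S} \wgedge{\agraph}{\agraphp}}{\sum_{\agraph\in S}\sum_{\agraphp \in \adata} \wgedge{\agraph}{\agraphp}},
\end{equation*}
so the weighted degree $w_{\agraph} = \sum_{\agraphp} \wgedge{\agraph}{\agraphp}$ sits in the denominator. I would substitute the PAG definition
$\wgedge{\agraph}{\agraphp} = \mathbb{E}_{\ngraph \sim \pndata}[\augp{\agraph}{\ngraph}\augp{\agraphp}{\ngraph}]$ into both sums.

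\textbf{Step 2: Use the GGA uniformity to extract a common scale.} Under a GGA, $\augp{\agraph}{\ngraph} \approx 1/|\mathcal{A}(\ngraph)|$ whenever $\agraph \in \mathcal{A}(\ngraph)$ and $0$ otherwise. Hence
\begin{equation*}
\wgedge{\agraph}{\agraphp} \;\approx\; \tfrac{1}{|\ndata|}\sum_{\ngraph:\, \agraph,\agraphp \in \mathcal{A}(\ngraph)} \tfrac{1}{|\mathcal{A}(\ngraph)|^2}.
\end{equation*}
Assuming (as is standard in this analysis and consistent with the IID setup in Thm.~\ref{theorem:genbound_specloss}) that $|\mathcal{A}(\ngraph)|$ is essentially the same constant $\kappa$ across natural samples, and that a co-occurring pair $(\agraph,\agraphp)$ (with $GED \le 2\delta$, by Lemma~\ref{lemma:connectivity_vs_GED}) is generated by a comparable number of shared natural samples, every \emph{present} edge carries the same weight $c := 1/(|\ndata|\kappa^2)$ up to this approximation. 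Thus $\wgedge{\agraph}{\agraphp} \approx c \cdot \mathds{1}(\wgedge{\agraph}{\agraphp} > 0)$.

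\textbf{Step 3: Collapse numerator and denominator.} The numerator becomes $c \cdot \sum_{\agraph \in S, \agraphp \notin S} \mathds{1}(\wgedge{\agraph}{\agraphp} > 0)$, exactly matching the top of the claimed expression. For the denominator, the weighted degree becomes $w_{\agraph} \approx c \cdot |\{\agraphp : \wgedge{\agraph}{\agraphp} > 0\}|$, i.e., $c$ times the size of $\agraph$'s edge set. Summing over $\agraph \in S$ gives $c \cdot \sum_{\agraph \in S} w_{\agraph}$ with $w_{\agraph}$ now interpreted as an edge count. The common factor $c$ cancels, producing
\begin{equation*}
\phi_G(S_i) \;=\; \frac{\sum_{\agraph\in S,\, \agraphp \notin S} \mathds{1}(\wgedge{\agraph}{\agraphp} > 0)}{\sum_{\agraph\in S} w_{\agraph}},
\end{equation*}
as stated. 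Note this is directly analogous to Definition~\ref{col:partion_dissimilarity}, since $\mathds{1}(\wgedge{\agraph}{\agraphp}>0)$ is equivalent to $\mathds{1}(GED(\agraph,\agraphp)\le 2\delta)$ by the preceding lemma.

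\textbf{Expected main obstacle.} The delicate point is justifying that $\wgedge{\agraph}{\agraphp}$ really is constant (up to the $\approx$) across all present edges. Strictly, it depends on how many natural samples $\ngraph$ can jointly generate both $\agraph$ and $\agraphp$, which can vary with graph size and local structure, and on whether $|\mathcal{A}(\ngraph)|$ is uniform in $\ngraph$. The cleanest remedy is to state this as a simplifying assumption (uniform prior on $\pndata$ and comparable $|\mathcal{A}(\ngraph)|$) and package the residual error into the $\approx$ already present in the statement, matching how the combinatorial approximation for $|\mathcal{A}(\ngraph)|$ was handled in App.~\ref{app:pag}.
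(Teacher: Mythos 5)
Your proposal is correct and follows essentially the same route the paper takes: the paper states this corollary without an explicit derivation, relying implicitly on the uniformity $\augp{\agraph}{\ngraph} \approx 1/|\mathcal{A}(\ngraph)|$ to reduce the weighted cut/volume ratio to edge counting, which is exactly the cancellation you carry out in Steps 2--3. Your explicit flagging of the hidden assumption (that every present edge carries a comparable weight, which requires both a uniform prior on $\pndata$ and comparable $|\mathcal{A}(\ngraph)|$ and co-occurrence counts) is a more careful treatment than the paper's, but it does not change the argument's substance.
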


Using this definition, we can substitute into the original SpecLoss generalization bound and recover the result presented in Thm. \ref{theorem:genbound_specloss}.

\section{Dataset Generation and Experimental Details}\label{app:data}
\begin{figure}[htp]
  \centering
    \includegraphics[width=0.98\textwidth]{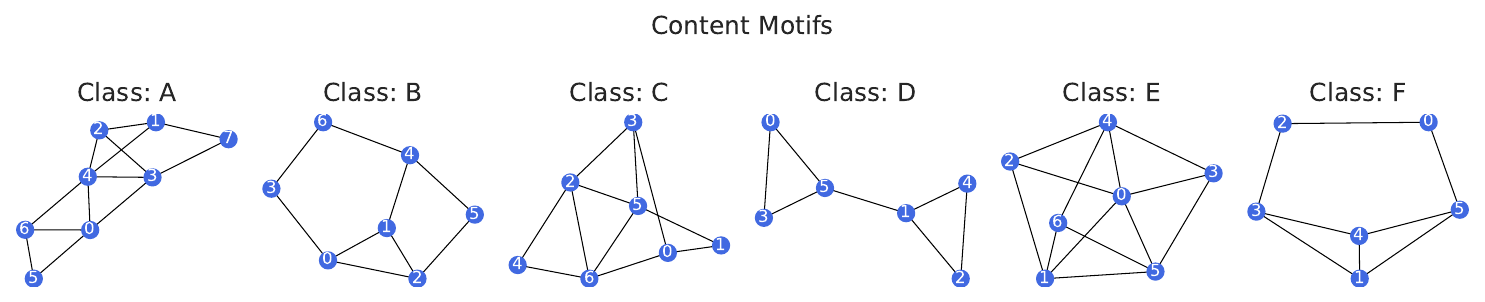}
  \caption{\textbf{Motifs used to determine class labels.}}
\end{figure}\label{fig:class_motifs}

We use the motifs shown in Fig. \ref{fig:class_motifs} to define a 6 class graph classification task. It is important to ensure that the motifs are not isomorphic, as many GNNs are less expressive than the 1-Weisfeiler Lehman's test for isomorphism (\cite{Hu19_HowPowerfulAreGNN}). For each class, 1000 random samples are generated as follows: (i) We randomly select between 1-3 motifs to be in each sample. At this time, motifs all belong to the same class, though this condition could easily be changed for a more difficult task. (ii) We define the number of content nodes, $C_n$, as the size of the selected motif, scaled by the number of motifs in the sample. (iii) For a given style ratio, we determine the number of possible style nodes as $S_n = \rho C_n$ (iv). We define $RBG(n)$ using  networkx's \footnote{https://networkx.org/documentation/stable/} random tree generator: \verb|networkx.generators.trees.random_tree|. We note that other random graph generators would also be well suited for this task. (v) For additional randomness, we create background graphs using $S_n\pm2$, and also randomly perturb up-to 10\% of edges in sample. We repeat this set-up with $\rho \in \{0.5, 1.0, 1.5, 2.0, 2.5, 3.0,3.5,4.0,4.5,5.0,5.5,6.0,6.5,7.5,8.0 \}$ to generate the datasets used in Sec \ref{sec:stylevcontent}.

\textit{Experimental Set-up:} We follow You et al. ~\cite{You20_GraphContrastiveLearning} for TUDataset experiments. When reporting the kNN accuracy, we tune $k \in \{5,10,15,20\}$ separately on validation data for each dataset and method to allow for the strongest baselines. 
For synthetic datasets we use the following setup.  Our encoder is a 5-layer GIN model with mean pooling.  We set input node features to be a constant 10-dimensional feature vector, and a hidden layer dimension is 32; we concatenate hidden representations for a representation dimension of 160.  Models are pretrained for 60 epochs.  Subsequently, we use a linear evaluation protocol and train a linear head for 200 epochs. All models are trained with Adam, lr = 0.01. 
\begin{table}[t!]
\centering
{\footnotesize
    \caption{\textit{Dataset Description}}
    \label{tab:datasets}
    \begin{tabular}{l r@{\hspace{2pt}}  r@{\hspace{2pt}} @{\hspace{2pt}}r @{\hspace{2pt}}r @{\hspace{2pt}}r}
    \toprule
      \textbf{Name} &\textbf{Graphs} & \textbf{Classes} & \textbf{Avg. Nodes} & \textbf{Avg. Edges}  & \textbf{Domain} \\
      %\textbf{Attributes} \\
    \midrule
        IMDB-BINARY~ \citep{Yanardag15_DeepGraphKernels} & 1000 & 2 & 19.77 & 96.53 & Social\\
        REDDIT-BINARY ~\citep{Yanardag15_DeepGraphKernels} & 2000 & 2 & 429.63 & 497.75 & Social\\
         MUTAG ~\citep{Kriege12_SubgraphMatching}  & 188 & 2 & 17.93 & 19.79 & Molecule \\
         PROTEINS ~\citep{Borgwardt05_PROTEINS} & 1113 & 2 & 39.06 & 72.82 & Bioinf. \\
         DD ~\citep{Shervashidze11_DD}  & 1178 & 2 & 284.32 & 715.66 & Bioinf. \\
     NCI1 ~\citep{Wale06_NCI1} & 4110 & 2 & 29.87 & 32.30 & Molecule \\ 
    \bottomrule
    \end{tabular}
}
\end{table}

\section{Related Work}
\label{app:related}
\begin{table}[h]
  \caption{\textbf{Selected Graph Contrastive Learning Frameworks.} We provide a brief description of augmentations used by selected frameworks. Most frameworks use random corruptive, sampling, or diffusion-based approaches to generate augmentations.}
  \label{sample-table}
  \centering
  {\small
  \begin{tabular}{lp{7cm}}
  %\begin{tabular}{lp{8cm}l}
    \toprule
    \cmidrule(r){1-2}
    Method & Augmentations\\
    \midrule
    GraphCL (\cite{You20_GraphContrastiveLearning}) & Node Dropping, Edge Adding/Dropping, Attribute Masking, Subgraph Extraction    \\
    GCC (\cite{Qiu20_GCC}) &  RWR Subgraph Extraction of Ego Network \\
    MVGRL (\cite{Hassani20_MVGRL}) & PPR Diffusion + Sampling\\
    GCA (\cite{Zhu20_GCA}) & Edge Dropping, Attribute Masking (both weighted by centrality) \\
    BGRL (\cite{Thakoor21_BGRL}) & Edge Dropping, Attribute Masking \\
    SelfGNN (\cite{Kefato21_SelfGNN})  & Attribute Splitting, Attribute Standardization + Scaling, Local Degree Profile, Paste + Local Degree Profile \\
    \bottomrule
  \end{tabular}
  }
\end{table}\label{tab:relatedworks}

\textit{Graph Data Augmentation:} Unlike images, graphs are discrete objects that do not naturally lie in Euclidiean space, making it difficult to define meaningful augmentations. Furthermore, while for images or natural language, there may be an intuitive understanding of what changes will preserve task-relevant information, this is not the case for graphs. Indeed, a single edge change can completely change the properties of a molecular graph. Therefore, only a few works consider graph data augmentation. \cite{Zhao20_DataAugGNN} note that a node classification task can be perfectly solved if edges only exist between same class samples. They increase homophily by adding edges between nodes that a neural network predicts belong to the same class and breaking edges between nodes of predicted dissimilar classes. However, this approach is expensive and not applicable to graph classification. \cite{Kong20_FLAG} argue that information preserving topological transformations are difficult for the aforementioned reasons and instead focus on feature augmentations. Throughout training, they add an adversarial perturbation to node features to improve generalization, computing the gradient of the model weights while computing the gradients of the adversarial perturbation to avoid more expensive adversarial training~\cite{Shafahi19_AdvTrainingFree}. This approach is not directly applicable to contrastive learning, where label information cannot be used to generate the adversarial perturbation.

\textit{Graph Self-Supervised Learning: } In graphs, recent works have explored several paradigms for self-supervised learning: see \cite{Liu21_SSLGNNSurvey} for an up-to-date survey. Graph pre-text tasks are often reminiscent of image in-painting tasks \cite{Yu18_InPainting}, and seek to complete masked graphs and/or node features (\cite{You20_WhenDoesSSLHelp,Hu20_PretrainingGNNs}). Other successful approaches include predicting auxiliary properties of nodes or entire graphs during pre-training or part of regular training to prevent overfitting (\cite{Hu20_PretrainingGNNs}). These tasks often must be carefully selected to avoid negative transfer between tasks. Many contrast-based unsupervised approaches have also been proposed, often inspired by techniques designed for non-graph data. \cite{Sun20_InfoGraph, Velickovic19_DGI} draw inspiration from \cite{Hjelm19_DeepInfoMax} and maximize the mutual information between global and local representations. MVGRL (\cite{Hassani20_MVGRL}) contrasts different views at multiple granularities similar to \cite{Oord18_CPC}. \cite{You20_GraphContrastiveLearning, Qiu20_GCC, Zhu20_GCA,Thakoor21_BGRL, Kefato21_SelfGNN} use augmentations (which we summarize in Table \ref{tab:relatedworks}) to generate views for contrastive learning. We note that random corruption, sampling or diffusion based approaches used to create generic graph augmentations often do not preserve task-relevant information or introduce meaningful invariances.

%  \textit{Understanding SSL} Recent work has sought to understand how contrastive learning enables strong representations. Many of these works assume that augmented data are independent \cite{Arora19_TheoreticalAnalysis, }
\end{document}